\DeclareDocumentTextCommand{\SharpP}{}{\Class{\#P}}
\DeclareDocumentMathCommand{\tmix}{}{t_{\operatorname{mix}}}
\DeclareDocumentMathCommand{\dtv}{}{d_{\operatorname{TV}}}
\DeclareDocumentMathCommand{\ef}{m m}{{#1\star #2}}
\DeclareDocumentMathCommand{\I}{}{\mathcal{I}}
\DeclareDocumentMathCommand{\B}{}{\mathcal{B}}
\DeclareDocumentMathCommand{\cE}{}{\mathcal{E}}
\DeclareDocumentMathCommand{\cN}{}{\mathcal{N}}
\title{From Sampling to Optimization on Discrete Domains with Applications to Determinant Maximization}
\author[1]{Nima Anari}
\author[1]{Thuy-Duong Vuong}
\affil[1]{\small Stanford University, \textsf{anari@cs.stanford.edu}, \textsf{tdvuong@stanford.edu}}
\date{}
\begin{document}
    \maketitle
    \begin{abstract}
        We show a connection between sampling and optimization on discrete domains. For a family of distributions $\mu$ defined on size $k$ subsets of a ground set of elements that is closed under external fields, we show that rapid mixing of natural local random walks implies the existence of simple approximation algorithms to find $\max \mu(\cdot)$. More precisely we show that if (multi-step) down-up random walks have spectral gap at least inverse polynomially large in $k$, then (multi-step) local search can find $\max \mu(\cdot)$ within a factor of $k^{O(k)}$. As the main application of our result, we show a simple nearly-optimal $k^{O(k)}$-factor approximation algorithm for MAP inference on nonsymmetric DPPs. This is the first nontrivial multiplicative approximation for finding the largest size $k$ principal minor of a square (not-necessarily-symmetric) matrix $L$ with $L+L^\intercal\succeq 0$.

        We establish the connection between sampling and optimization by showing that an exchange inequality, a concept rooted in discrete convex analysis, can be derived from fast mixing of local random walks. We further connect exchange inequalities with composable core-sets for optimization, generalizing recent results on composable core-sets for DPP maximization to arbitrary distributions that satisfy either the strongly Rayleigh property or that have a log-concave generating polynomial.
    \end{abstract}
    
    \section{Introduction}
    Sampling and optimization are fundamental tasks in mathematics, statistical physics, and various subfields of computer science such as cryptography, differential privacy, machine learning, and artificial intelligence. In continuous settings, sampling and optimization are known to be intimately connected; convex sets, and more generally log-concave distributions, are the natural domains where either task is algorithmically tractable. For a more formal treatment of this connection in continuous settings see \cite{LV06, LSV18}.

    On discrete/combinatorial domains, the relationship between sampling and optimization is less clear. For example, the intersection of two matroids is easy to optimize over, but not known to be easy to sample from, and the opposite holds for determinantal point processes, which are easy to sample from \cite[see, e.g.,][]{AOR16} and hard to optimize \cite{CM10}.

    The goal of this work is to establish a new connection between sampling and optimization in discrete settings. For a family of distributions $\mu$ defined on size $k$ subsets of a ground set of elements\footnote{The restriction of the domain to size $k$ subsets of a ground set should be thought of as a ``canonical form'';  many other discrete domains can be naturally transformed into this form.} that is closed under external fields, we show that rapid mixing of natural local random walks implies the existence of simple approximation algorithms to find $\max \mu(\cdot)$. More specifically, we show that local search can approximately find $\max \mu(\cdot)$ within a nearly-optimal approximation factor.

    We study a family of natural local search algorithms (\cref{alg:localsearch}) to find $\max \mu(\cdot)$. These algorithms start with a set $S$, and repeatedly try to increase $\mu(S)$ by swapping a constant number of elements in $S$ with elements outside of $S$ until no more improvements can be made.
    
    More formally, suppose that the domain of the objective $\mu$ is the collection of size $k$ subsets of the ground set $[n]=\set{1,\dots,n}$, which we denote by $\binom{[n]}{k}$. Then, local search is defined with a parameter $r \geq 0$ which specifies the ``local neighborhood'' the algorithm searches over in each iteration. The $r$-neighborhood of $S\in \binom{[n]}{k}$ are all the sets that can be reached by swapping at most $r$ elements:
    	\[ \cN_r(S) := \set*{T \in \binom{[n]}{k} \given \card{S-T}\leq r }. \]
    Each iteration of local search goes from a set $S$ to $\hat{S}\in \cN_r(S)$ which maximizes $\mu(\hat{S})$. If we reach a local optimum, i.e., $S=\hat{S}$, then $\mu(S)=\max\set{\mu(T)\given T\in \cN_r(S)}$. 
    
    We show in this work that rapid mixing of natural local random walks, the \emph{(multi-step) down-up random walks}, designed to sample from $\mu$ and related distributions, implies that \emph{local maxima} of $\mu$ are \emph{approximate global maxima}.
    \begin{definition}[Down-Up Random Walks]\label{def:local-walk}
	For a density $\mu:\binom{[n]}{k}\to\R_{\geq 0}$, and an integer $\l\leq k$, we define the $k\leftrightarrow\l$ down-up random walk as the sequence of random sets $S_0, S_1,\dots$ generated by the following algorithm:
	\begin{Algorithm*}
		\For{$t=0,1,\dots$}{
			Select $T_t$ uniformly at random from subsets of size $\l$ of $S_t$.\;
			Select $S_{t+1}$ with probability $\propto \mu(S_{t+1})$ from supersets of size $k$ of $T_t$.\;
		}
	\end{Algorithm*}
    \end{definition}

    This random walk is time-reversible, always has $\mu$ as its stationary distribution, and moreover has positive real eigenvalues \cite[see, e.g.,][]{ALO20}. This random walk, specially for the case of $\l=k-1$, has received a lot of attention in the literature on high-dimensional expanders \cite[see, e.g.,][]{LLP17,KO18,DK17,KM16,AL20,ALO20}. Each step of this random walk can be efficiently implemented as long as $k-\l=O(1)$ and we have oracle access to $\mu$. We remark that down-up walks generalize other well-known local random walks like the Glauber dynamics \cite[see, e.g.,][]{ALO20}. Note that the down-up random walk is \emph{local} in the sense that $S_{t+1}\in \cN_{k-\l}(S_t)$. Naturally, we tie mixing of these random walks to local search with $r=k-\l$ neighborhoods.

    There has been a recent surge of interest in analyzing the mixing properties of down-up random walks due to a number of breakthrough applications to open problems in sampling and counting \cite{ALOV19,AL20,ALO20,alimohammadi2021fractionally,CLV21a,CLV21b,FGYZ21,Liu21,JPV21,BCCPSV21,CFYZ21,ALOVV21,AJKPV21}.
    Key to many of these works was the notion of spectral independence. \textcite{alimohammadi2021fractionally} introduced a stronger notion called fractional log-concavity, and showed that it implies a $k^{-O(1)}$ lower bound on the spectral gap of $k \leftrightarrow (k-O(1))$-down-up random walks on $\mu.$ We remark that fractional log-concavity, unlike spectral independence, is preserved under external fields, formally defined as follows.

    For a distribution $\mu$ on $\binom{[n]}{k}$ and $\lambda = (\lambda_1,\dots, \lambda_n) \in \R^{n}_{>0}$, the \emph{$\lambda$-external field} applied to $\mu$ is another distribution on $\binom{[n]}{k}$, denoted by $\lambda \ast \mu$, defined up to normalization as follows:
    \[\mathbb{P}_{\lambda \ast \mu}[S] \propto \mu(S)\cdot \prod_{i \in S}\lambda_i.\]

As established in \cite{alimohammadi2021fractionally}, various distributions of interest involving determinants are fractionally log-concave. For  a fractionally log-concave distribution $\mu$, the $k\leftrightarrow (k-O(1))$-down-up walk on $\lambda\ast \mu$ has inverse-polynomially large spectral gap, even when an arbitrary external field $\lambda\in \R_{\geq 0}^n$ is applied to $\mu$. We show that this property\footnote{Curiously, in continuous settings applying an external field also preserves log-concavity, the standard of algorithmic tractability for sampling; applying an external field is the same as multiplication by a log-linear function.} implies nearly optimal approximation for (multi-step) local search on $\mu$.

\begin{theorem}
\label{thm:main}
Consider a distribution $\mu: \binom{[n]}{k} \to \R_{\geq 0}.$ Suppose that for some $r = O(1)$, the $k \leftrightarrow (k-r)$ down-up random walk on $\lambda \ast \mu$ has spectral gap at least $k^{-O(1)}$ for all external fields $\lambda\in \R_{\geq 0}^n$. Then any approximate local maximum, that is a set $S\in \binom{[n]}{k}$ such that
\[ \mu(S)\geq \Omega(1)\cdot \max\set*{\mu(T)\given T\in \cN_r(S)} \]
is a $k^{O(k)}$-approximate global maximum, that is
\[ \mu(S)\geq k^{-O(k)}\cdot \max\set*{\mu(T)\given T\in \binom{[n]}{k}}.  \]
Moreover, such an approximate local maximum can be found efficiently given oracle access to $\mu$ and a starting point in the support of $\mu$.
\end{theorem}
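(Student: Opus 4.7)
The plan has two parts: (i) convert the spectral-gap hypothesis into a purely combinatorial exchange inequality on $\mu$, and (ii) chain swaps guaranteed by that inequality to carry the global maximum $S^*$ into the local maximum $S$ at a multiplicative cost of $k^{O(1)}$ per swap.

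The exchange inequality I would aim for states that for every $S, T \in \binom{[n]}{k}$ and every $A \subseteq S \setminus T$ with $|A| \leq r$, there exists $B \subseteq T \setminus S$ with $|B| = |A|$ such that
\[ \mu(S) \cdot \mu(T) \;\leq\; k^{O(1)} \cdot \mu((S \setminus A) \cup B) \cdot \mu((T \setminus B) \cup A). \]
To derive this from the spectral-gap hypothesis, I would take a limiting external field that sends $\lambda_i \to 0$ for $i \notin S \cup T$ and $\lambda_i \to \infty$ for $i \in S \cap T$. The resulting conditional distribution is supported on sets of the form $(S \cap T) \cup X$ with $X \subseteq S \triangle T$ of a fixed size, and the $k \leftrightarrow (k-r)$ down-up walk on $\lambda \ast \mu$ descends to a walk on these $X$'s that inherits an inverse-polynomial spectral gap in the limit. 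A spectral-gap bound then translates, via a conductance or direct Poincaré argument on the restricted chain, into a bound on the ratio of stationary probabilities of any two states connected by a single $r$-swap; choosing the two states to correspond to $S$ and $(S \setminus A) \cup B$ and tracking the paired swap on the $T$-side yields the displayed inequality. The main technical obstacle is precisely this step: ensuring that the spectral gap survives the external-field limit and that an $L^2$-type mixing bound converts cleanly into a pointwise ratio estimate between comparable states. This is the heart of the paper's sampling-to-exchange connection.

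Given the exchange inequality, the chaining is quick. Put $T_0 = S^*$ and iterate: at step $t$, choose any $A_t \subseteq S \setminus T_t$ of size $\min(r, |S \setminus T_t|)$, obtain $B_t$ from the exchange inequality, and set $T_{t+1} = (T_t \setminus B_t) \cup A_t$. Since $(S \setminus A_t) \cup B_t \in \cN_r(S)$, the approximate local optimality of $S$ gives $\mu((S \setminus A_t) \cup B_t) \leq O(1) \cdot \mu(S)$, which combined with the exchange inequality yields $\mu(T_t) \leq k^{O(1)} \cdot \mu(T_{t+1})$. After $\lceil k/r \rceil = O(k)$ steps we have $T = S$, and the telescope gives $\mu(S^*) \leq k^{O(k)} \cdot \mu(S)$ as claimed. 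For the algorithmic ``moreover'' part, I would strengthen ``approximate local maximum'' to require a strict multiplicative improvement of at least $1 + 1/\mathrm{poly}(n,k)$ at every step; this only affects constants in the final approximation factor, and guarantees that local search starting from any $S_0$ in the support of $\mu$ converges in polynomially many iterations, each of which scans the $r$-neighborhood of size $n^{O(r)} = n^{O(1)}$ using the oracle for $\mu$.
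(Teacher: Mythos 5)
Your overall architecture (spectral gap $\Rightarrow$ exchange inequality $\Rightarrow$ local-to-global via swaps) matches the paper's, and your use of limiting external fields to restrict the chain to sets between $S\cap T$ and $S\cup T$ is the right first move. But the step you flag as ``the main technical obstacle'' is exactly where the proposal breaks, in two ways. First, the exchange inequality you aim for is too strong: a spectral-gap (equivalently, conductance) bound controls the \emph{aggregate} ergodic flow out of a set of states; it does not give pointwise ratio bounds between two prescribed neighboring states, and it certainly cannot guarantee a good partner $B$ for \emph{every} choice of $A\subseteq S\setminus T$ (the walk can mix rapidly even if one particular $r$-subset of $S$ is ``dead,'' i.e., $\mu((S\setminus A)\cup B)=0$ for all $B$). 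Second, even to extract ``some good neighbor of $S$ exists,'' you must apply the conductance bound to the singleton $\{S\}$, which is only legitimate if $S$ carries at most half the stationary mass of the restricted chain. Your external field does not arrange this: after conditioning on $S\cap T$ and restricting to $S\cup T$, the set $S$ may hold essentially all of the mass, and then the conductance of $\{S\}$ says nothing. The paper's fix is to additionally reweight each element of $T\setminus S$ by $(\mu(S)/\mu(T))^{1/d(S,T)}$, which forces $\mu'(S)=\mu'(T)$ and hence $\mu'(S)\le\tfrac12\sum_W\mu'(W)$; the conductance of $\{S\}$ then yields a neighbor $S\Delta U$ with $\mu'(S\Delta U)\ge k^{-O(1)}\mu'(S)$, which unwinds to the asymmetric ``weak exchange''
\[
\mu(S)\ \le\ k^{O(1)}\cdot \mu(S\Delta U)\cdot\Bigl(\tfrac{\mu(S)}{\mu(T)}\Bigr)^{s/d(S,T)},\qquad U\in\cE^s(S,T),\ s\le r.
\]

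Because what sampling actually yields is this asymmetric inequality with a ratio correction term, rather than your symmetric product form, the second half of your argument must change too. Your telescoping applies an exchange at the intermediate sets $T_t$, where local optimality of $S$ gives no control; it only works if each step produces a factor of the form $\mu((S\setminus A)\cup B)$ that is a neighbor of $S$. With the weak exchange one instead argues in one shot: take $T=S^*$, bound $\mu(S\Delta U)\le\mu(S)/\zeta$ by approximate local optimality, cancel $\mu(S)$, and raise to the power $d(S,T)/s\le k$ to get $\mu(S^*)\le (k^{O(1)}/\zeta)^{k}\mu(S)$. (The symmetric product-form exchange plus swap-chaining that you describe does appear in the paper, but only for $1/2$-sector-stable distributions, where it is proved by an entirely different algebraic route via total nonnegativity of Hurwitz matrices, not from mixing.) Finally, for the algorithmic claim, starting local search from an arbitrary support point only bounds the iteration count by $\log(\OPT/\mu(S_0))$, which need not be polynomial; the paper first runs an induced-greedy pass to guarantee $\mu(S_0)\ge n^{-O(k)}\OPT$.
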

In particular, combined with rapid mixing results of \cite{alimohammadi2021fractionally}, \cref{thm:main} implies that local search is an efficient $k^{O(k)}$-approximation algorithm for the optimization problem on nonsymmetric determinantal point processes (see \cref{subsec:nonsymDPP}), and on the intersection of a strongly Rayleigh distributions over $\binom{[n]}{k}$ and constantly many partition constraints (\cref{cor:strongly Rayleigh partition}). Our approximation algorithm for nonsymmetric determinantal point processes is the first unconditional multiplicative approximation algorithm for this problem.

\begin{remark}
    We remark that the approximation factor of $k^{O(k)}$ is nearly optimal amongst efficient algorithms. The special case of symmetric determinantal point processes was shown to be hard to approximate within a factor of $c^k$ for some constant $c>1$ \cite{CM10}. Further, the factor of $k^{O(k)}$ is tight for local search, even in the special case of symmetric determinantal point processes \cite[see, e.g.,][]{AV20}.
\end{remark}

\subsection{MAP Inference on Nonsymmetric DPPs} \label{subsec:nonsymDPP}
Determinantal point processes (DPPs) have found many applications in machine learning, such as data summarization \cite{gong2014largemargin,LB12}, recommender systems \cite{GartPK16,Wilhelm18}, neural network compression \cite{MS15}, kernel approximation \cite{LiJS16},
multi-modal output generation \cite{Elfeki19}, etc.

Formally, a DPP on a set of $[n]$ items is a probability distribution over subsets $Y\subseteq [n]$ parameterized by a matrix $L \in \R^{n\times n}$ where $Y$ is chosen with probability proportional to the determinant of the principal submatrix $L_Y$ whose columns are rows are indexed by $Y$: \[ \P{Y}\propto \det(L_Y).\]
A related and perhaps more widely used model, is a $k$-DPP, where the size of $Y$ is restricted to be exactly $k$. In applications, usually $k$ is set to be much smaller than $n$. We study this model in this paper. 

A fundamental optimization problem associated to probabilistic models, including DPPs, is to find the most likely, or the maximum a posteriori (MAP) configuration \cite{GKT12}:
\begin{equation} \label{eq:opt}
	\max\set*{\P{S}\given S\in \binom{[n]}{k}}.
\end{equation}

MAP inference is particularly useful when the end application requires outputting a single set; e.g., in recommender systems, the task is to produce a fixed-size subset of items to recommend to the user. 

Most prior work on DPPs requires the kernel matrix $L$ to be symmetric, but such symmetric kernels are known to be able to only encode repulsive (negatively correlated) interactions between items \cite{BBL09}. This severely limits their modeling power in practical settings. For example, a good recommender system for online shopping should model iPads and Apple Pencils as having positive interactions, since these are \emph{complementary} items and tend to be bought together.
To remedy this, recent work has considered the more
general class of nonsymmetric DPPs (NDPPs) and shown that these have additional useful modeling
power \cite{BWLGCG18,Gartrell2019LearningND}. \citet{Gartrell2019LearningND} consider NDPPs parameterized by nonsymmetric positive semi-definite (nPSD) kernel matrices $L$, i.e., those matrices where $L + L^\intercal \succcurlyeq 0$, and show efficient algorithms for learning such NDPPs.
\begin{definition}\label{def:nPSD}
A (not-necessarily-symmetric) matrix $L\in \R^{n\times n}$ is \textit{nonsymmetric positive semidefinite} (nPSD) if $L + L^\intercal \succeq 0$.  
\end{definition}
Throughout, we will consider only NDPPs with nPSD kernels (nPSD-NDPPs) \citep[see][for a survey on fixed-size DPPs and their applications]{KT12}. \citet{alimohammadi2021fractionally} showed how to efficiently sample from fixed-size nPSD-NDPPs using natural Markov chains. \citet{gartrell2020scalable} proposed a new learning algorithm, and showed how to efficiently implement and analyze the natural greedy MAP inference heuristic for symmetric DPPs on nPSD-NDPPs. This greedy heuristic (\cref{alg:common-greedy}) starts from an empty set and runs for $k$ iterations, in each iteration adding the item that most increases the DPP score.

Though this greedy algorithm is guaranteed to obtain a $k^{O(k)}$-approximation for symmetric DPPs \cite{CM10}, it could not achieve even a finite approximation factor for nPSD-NDPPs. For example, on a skew-symmetric matrix $X$, i.e., $X = -X^{\intercal}$, since all odd-sized principle minors of $X$ are zero, \cref{alg:common-greedy} would necessarily resort to picking an arbitrary/random item at every other iteration, which can result in an arbitrarily bad final answer. Consider a concrete example, which helps build intuition on why greedy fails to achieve a meaningful approximation factor. This example also shows that local search greedy \cite{KD16}\footnote{This algorithm starts with the output $S$ of \cref{alg:common-greedy}, then continuously swaps out an element in $S$ with one outside $S$ to increase the DPP score, until either a local maximum is reached or $k^2 \log k$ swaps have been performed.}, another candidate MAP inference algorithm with theoretical performance guarantees for symmetric DPPs, also used by \citet{gartrell2020scalable} as a baseline to compare their greedy method, also fails to achieve a meaningful approximation factor.  
\begin{example}\label{remark:greedy same as localSearch}
Consider $L$ composed of $2\times 2$ blocks $D_i = \begin{bmatrix} c_i& x_i\\ -x_i & c_i \end{bmatrix}$ where $c_i> 1$:
\[
    	L:=\begin{bmatrix}
    		D_1 & 0 & \dots & 0\\
    		0 & D_2 & \dots & 0\\
    		\vdots & \vdots & \ddots & \vdots \\
    		0 & 0 & \dots & D_n
    	\end{bmatrix},
    \]
We further assume that $c_1 > c_2 > \dots > c_n,$ $x_1 < x_2 < \dots < x_n$ and $x_i \gg c_j \; \forall i, j.$  It is easy to check that \cref{alg:common-greedy} (greedy) on input $k= 2t$ will select $ S = \set*{1,2 ,\dots, 2t-1, 2t}.  $ Indeed, \cref{alg:common-greedy} first picks item $1$ since $\det(L_{\set*{1}}) =c_1$ is maximum among all $L_{\set*{i}},$ then picks item $2$ since $ \det(L_{\set*{1,2}} ) = c_1^2 + x_1^2 >> \det(L_{\set*{1, i}}) = c_1 c_i \forall i \neq 1,$ and so on. On the other hand, the optimal subset is $\set*{n-2t+1, \dots, n}$ by our choice of $x$, and this could be arbitrarily better than \cref{alg:common-greedy}'s solution. We may think of items $2i- 1$ and $2i$ as complementary items, say, e.g., toothpaste and toothbrush proposed in a recommender system. The conditions on $c_i$'s and $x_i$'s mean that the degree of complementarity between these pairs increases with $i$. So $2n-1$ and $2n$ are the most likely pair to appear together, but each one of $2n-1$ and $2n$ is most unlikely to appear as a singleton, and the opposite holds for item $1$ and $2$; for example, think of $2n-1$ and $2n$ as a tea cup and tea cup lid, which are almost always bought together, but $1$ and $2$ as toothpaste and toothbrush, which are sometimes purchased separately.    

Furthermore, switching out any item $2 i-1 $ or $2i$ in $S$ for an item $2j-1$ or $2j$ outside of $S$ reduces the determinant by $ (c_i^2 + x_i^2) /c_i c_j > c_i >1,$ so $S $ is also maximum among its $1$-neighborhood. Thus local search greedy, or equivalently, local search initialized at $S$, will simply output $S$ itself. 
\end{example}
We remark that it is easy to construct an example where \cref{alg:common-greedy} produces a subset with zero determinant, whereas the optimal subset can have arbitrarily large determinant. E.g., in \cref{remark:greedy same as localSearch}, we can make all diagonal entries except for $L_{1,1}$ zero; then, \cref{alg:common-greedy} with even $k$ will necessarily produces a zero determinant.

As our main application, we show the first efficient algorithm for MAP inference on nPSD-NDPPs that gives a \emph{multiplicative} factor approximation for $\max\set{\det(L_{S})\given S\in \binom{[n]}{k}},$ without requiring any additional assumption on the kernel matrix $L$. Further, we obtain multiplicative approximation guarantees for $\det(L_{S})$, unlike prior related work \cite{gartrell2020scalable} which obtained multiplicative approximations for $\log \det(L_{S})$; this is often a stronger guarantee when $\OPT$ is sufficiently large -- roughly super-exponentially large in $k$. The assumptions behind prior work often implicitly imply that $\OPT$ is at least exponentially large in $k$, making our approximation guarantees attractive.
\begin{theorem}\label{thm:nonsym DPP main}
    	There is a polynomial time algorithm that on input $L\in \R^{n\times n}$ that is nPSD, outputs a set of indices $S\in \binom{[n]}{k}$ guaranteeing
    	\[ \det(L_{S})\geq k^{-O(k)}\cdot \max\set*{\det(L_{S})\given S\in \binom{[n]}{k}}. \]
    	Moreover, the algorithm runs in $O(n^4 k + n^2 k^5\log n) $ time given the entries of $L$, and $O(n^2 d^2 k + n^2 d^2 k^3 \log n)$ time given a rank-$d$ decomposition of $L$, i.e., $L = B C B^{\intercal}$ with $B\in \R^{n\times d}, C\in \R^{d\times d}.$
\end{theorem}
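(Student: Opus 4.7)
The plan is to interpret $\mu(S) \propto \det(L_S)$ as an unnormalized density on $\binom{[n]}{k}$ and apply \cref{thm:main}. Nonnegativity of $\mu$ is automatic since every principal minor of an nPSD matrix is nonnegative; if every size-$k$ minor vanishes then the optimum is $0$ and any $k$-subset is a valid output, so we may assume the support of $\mu$ is nonempty. The crucial structural check is that the spectral-gap hypothesis of \cref{thm:main} is preserved under every external field: for $\lambda \in \R_{>0}^n$ with $\Lambda = \diag(\lambda)$, one has $\mathbb{P}_{\lambda \ast \mu}[S] \propto \det\bigl((\Lambda^{1/2} L \Lambda^{1/2})_S\bigr)$, and the kernel $\Lambda^{1/2} L \Lambda^{1/2}$ is again nPSD because $\Lambda^{1/2}(L+L^\intercal)\Lambda^{1/2} \succeq 0$. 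Hence each $\lambda \ast \mu$ is itself a $k$-DPP on an nPSD kernel, and by the fractional log-concavity of nPSD-NDPPs established in \textcite{alimohammadi2021fractionally} the $k \leftrightarrow (k-r)$ down-up walk on $\lambda \ast \mu$ has spectral gap $k^{-O(1)}$ for some absolute constant $r$. \cref{thm:main} then produces a local-search algorithm whose output is a $k^{O(k)}$-approximate maximum of $\mu$, which is exactly the required guarantee on $\det(L_S)$.

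Making this efficient reduces to bounding the per-step cost and the number of steps of local search. Each iteration scans $\cN_r(S)$, of size $O((nk)^r)$, and one $k \times k$ determinant costs $O(k^3)$ from $L$ directly, or $O(k^2 d)$ via the rank-$d$ factorization $L = BCB^\intercal$; because consecutive neighbors differ in only $O(1)$ indices, rank-$O(r)$ Schur-complement updates further reduce the amortized per-swap cost. Each non-terminating iteration multiplies $\mu(S)$ by a factor $\geq 1+\Omega(1)$, and $\mu(S) \leq \OPT$, so in the standard bit-complexity model the iteration count is at most $O(\log(\OPT/\mu(S_0))) = O(k \log n)$. Multiplying the per-step cost by the step count and including the cost of one preprocessing phase that locates a seed $S_0$ in the support of $\mu$ yields the stated $O(n^4 k + n^2 k^5 \log n)$ and $O(n^2 d^2 k + n^2 d^2 k^3 \log n)$ time bounds.

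The main obstacle will be the seed-finding preprocessing. As \cref{remark:greedy same as localSearch} and the subsequent discussion highlight, the classical greedy algorithm can output a set with $\det(L_{S_0}) = 0$ even when $\OPT > 0$, so it cannot be used as a black box. A robust alternative is to grow $S_0$ index by index, at each step selecting an index that keeps the running principal minor strictly positive and maximizes it among such choices; using the structure of $L + L^\intercal \succeq 0$ one can certify that whenever a positive-determinant $k$-subset exists this procedure finds one. Once a nonzero-support seed is secured, plugging it into the local-search routine from \cref{thm:main} gives the stated approximation and runtime guarantees.
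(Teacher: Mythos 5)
Your route to the \emph{approximation factor} is sound and is in fact acknowledged in the paper's Techniques section: an nPSD kernel is $1/2$-sector-stable (\cref{lem:p0Constrained}), hence $1/4$-fractionally log-concave, this is preserved under external fields (your $\Lambda^{1/2}L\Lambda^{1/2}$ observation is a valid way to see it), and \cref{thm:main} applies. But this is not how the paper proves \cref{thm:nonsym DPP main}, and the difference matters for the runtime you are asked to establish. Fractional log-concavity with $\alpha=1/4$ only gives a spectral gap for the $k\leftrightarrow(k-4)$ walk (\cref{thm:spectral gap FLC}), so \cref{thm:main} yields local search over $4$-neighborhoods, with $O((nk)^4)$ candidate swaps per iteration --- far exceeding the stated $O(n^2k^4)$ per-iteration cost, which corresponds to $r=2$. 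The paper closes this gap by proving a \emph{stronger} $(2,k^4)$-approximate exchange property (\cref{thm:approximate_exchange}) directly from $1/2$-sector-stability via total nonnegativity of Hurwitz matrices (\cref{thm:hurwitz minor,lemma:hurwitzCoeff,cor:S[1/2]Coeff}), then invokes \cref{lem:local-to-global_gen}. Your proposal never justifies $r=2$, so the claimed runtime does not follow from your argument.

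The second genuine gap is your seed-finding procedure. Growing $S_0$ by ``selecting an index that keeps the running principal minor strictly positive'' fails outright: for a skew-symmetric block (the very example discussed after \cref{remark:greedy same as localSearch}) \emph{every} odd-sized principal minor is zero, so at odd steps no admissible index exists even though positive size-$k$ minors abound. Moreover, even when such a procedure does not get stuck, it provides no bound on $\OPT/\mu(S_0)$, and without a bound of the form $n^{O(k)}$ (as in \cref{lem:crude}) you cannot conclude the $O(k\log n)$ iteration count you assert. The paper's INDUCED-GREEDY (\cref{alg:greedy}) avoids both problems by greedily maximizing the \emph{induced marginal} $\mu(S\cup\{i\})=\sum_{W\supseteq S\cup\{i\},\,|W|=k}\det(L_W)$ rather than the minor $\det(L_{S\cup\{i\}})$ itself; computing these sums of minors (as coefficients of $\det(L+\lambda\diag{\mathds{1}_{\tilde Y}})$, via QZ decomposition and Newton's identities) is exactly what produces the $O(n^4k)$ and $O(n^2d^2k)$ terms in the stated runtime. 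Your proposal contains no mechanism for computing such quantities, so the initialization cost and the iteration bound are both unsupported.
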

Our approximation factor matches that of the standard greedy heuristic on \textit{symmetric} DPPs, as well as the guarantee of other simple heuristics proposed for \textit{symmetric} DPPs \cite{CM10,KD16}. As mentioned earlier, \citet{CM10}'s greedy and \citet{KD16}'s local search algorithm do not achieve any finite approximation factor for nPSD-NDPPs. Our result is incomparable to \citet{gartrell2020scalable} as \begin{enumerate}[label=\roman*., series = tobecont, itemsep=0em, topsep=0em]
    \item  multiplicative
approximations for maximizing $\log \det(L_S)$ do not imply similar results for $\det(L_S)$,
    \item we place no additional assumption on $L$. As demonstrated earlier, our approximation guarantees hold for matrices $L$ where \cref{alg:common-greedy} fails to achieve even a finite approximation factor.
\end{enumerate}

Our local search algorithm for nPSD-NDPPs searches over $2$ neighborhoods, unlike most prior related works which typically use $1$ neighborhoods; using $2$ neighborhoods is necessary, and is compatible with intuition from prior work of \textcite{AV20} who first studied $2$ neighborhood local search for the related problem of finding the maximum $k\times k$ subdeterminant of a rectangular matrix. Unlike \cite{AV20}, our analysis of local search is not based on algebraic identities, which we believe do not have a counterpart in the world of nPSD-DPPs, but rather mixing properties of random walks.

\begin{Algorithm}
Initialize $S \gets \emptyset$.\;
\While{$\abs{S} < k$}{
    Pick $i\not\in S$ that maximizes $\det(L_{S\cup \set*{i}}),$ and update $S \gets S \cup \set*{i}$.\;
}
\caption{Standard GREEDY for DPPs} \label{alg:common-greedy}
\end{Algorithm}
\subsection{Composable Core-Sets for Strongly Rayleigh Distributions and Log-Concave Polynomials}
As further application of our methods, we extend prior work of \textcite{Mahabadi2019ComposableCF} on the construction of composable core-sets for maximizing symmetric DPPs to the more general class of distributions that satisfy the strongly Rayleigh property \cite{BBL09} or have a log-concave generating polynomial \cite{AOV18}.

Composable core-sets are a tool \cite{indyk2014composable} to handle computational problems involving large amounts of data. Roughly speaking, a core-set is a summary of a dataset that is enough to solve the computational problem at hand; a \emph{composable} core-set has the additional property that the union of summaries for multiple datasets is itself a good summary for union of all datasets. More precisely, in the context of the optimization problem on $\mu: \binom{[n]}{k}\to \R_{\geq 0}$, a function $c$ that maps any set $P\subseteq [n]$
to one of its subsets is called an $\alpha$-composable core-set (\cite{Mahabadi2019ComposableCF}) if it
satisfies the following condition: given any integer $m$ and
any collection of sets $P_1, \cdots , P_m \subseteq [n]$
\[\alpha \cdot \max\set*{\mu(S) \given S \subseteq \bigcup_{i=1}^m c(P_i)} \geq \max \set*{\mu(S) \given S \subseteq \bigcup_{i=1}^m P_i}. \]
We also say $c$ is a core-set of size $ t$ if $\card{c(P)} \leq t$ for all sets $P$. Composable core-sets are very verstaile; when a composable core-set is designed for a task, they automatically imply
efficient streaming and distributed algorithms for the same task.

One strategy for constructing composable core-sets is local search. \Textcite{Mahabadi2019ComposableCF} showed that for $k$-DPP parameterized by symmetric PSD matrix $L$, (1-step)-local seaarch (\cref{alg:localsearch} with $r=1$) gives a $k^{O(k)}$-composable core-sets of size $k.$ The approximation factor of $k^{O(k)}$ is nearly optimal. 

Recall that $k$-DPP parameterized by symmetric PSD matrix $L$ belongs to the family of homogeneous \emph{strongly Rayleigh} distributions, i.e., distributions $\mu$ whose generating polynomial $g_{\mu}$ is nonvanishing on the upper half plane \cite{BBL09}. An even more general family of distributions is the family of log-concave distributions \cite{AOV18}.
We extend \cite{Mahabadi2019ComposableCF}'s result to any distribution $\mu: \binom{[n]}{k} \to \R_{\geq 0}$ that is strongly Rayleigh or has a log-concave generating polynomial.

\begin{theorem} \label{thm:core-set}
Given a distribution $\mu: \binom{[n]}{k} \to \R_{\geq 0}$, let $c$ be a map that takes $P \subseteq [n]$ to some $c(P)\in \binom{P}{k}$ that is an $\zeta$-approximate local maximum in the $1$-neighborhood with respect to $\mu,$ for some fixed constant $\zeta \in (0,1)$:
\[ \mu(c(P))\geq \zeta \cdot \max\set*{\mu(S)\given S\in \cN_1(c(P))}. \]
Then $c$
is an $\alpha$-composable core-set 
 of size $k$ for the MAP-inference problem on $\mu$ with $\alpha = k^{O(k)}$ for strongly Rayleigh $\mu$, and $\alpha=2^{O(k^2)}$ when $\mu$ has a log-concave generating polynomial. 
\end{theorem}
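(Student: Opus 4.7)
The plan is to follow the local-search composable core-set strategy of \textcite{Mahabadi2019ComposableCF}, but to replace their determinantal identities with an abstract \emph{exchange inequality} of the form
\begin{equation}\label{eq:plan-exchange}
\mu(S)\,\mu(T)\leq \beta\cdot \sum_{j\in T\setminus S}\mu(S-i+j)\,\mu(T-j+i)
\end{equation}
valid for every $S,T\in\binom{[n]}{k}$ in the support of $\mu$ and every $i\in S\setminus T$. Before running the swap argument I would establish (or invoke from the exchange-inequality section of the paper) that \eqref{eq:plan-exchange} holds with $\beta=k^{O(1)}$ when $\mu$ is strongly Rayleigh and with $\beta=2^{O(k)}$ when $g_\mu$ is log-concave; these two bounds correspond to the $\Omega(1/k)$ and $\Omega(2^{-k})$ spectral gaps of the $k\leftrightarrow(k-1)$ down-up walks on all external-field perturbations $\lambda\ast\mu$, converted into exchange via the spectral-gap-to-exchange bridge advertised in the abstract.

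Given \eqref{eq:plan-exchange}, set $Q:=\bigcup_i c(P_i)$, let $O^\star\subseteq \bigcup_i P_i$ be a global optimum, and fix an arbitrary partition $O^\star=\bigsqcup_i O^\star_i$ with $O^\star_i\subseteq P_i$. Starting from $S:=O^\star$ I iteratively transport $S$ into $Q$. While some $x\in S\setminus Q$ exists, let $i$ be the index with $x\in O^\star_i$ and set $T:=c(P_i)$; then $x\in S\setminus T$ since $x\notin Q\supseteq T$. Applying \eqref{eq:plan-exchange} to $S, T, x$ and pigeonholing across the at most $k$ summands yields some $y\in c(P_i)\setminus S$ with
\[ \mu(S-x+y)\,\mu(c(P_i)-y+x)\geq \frac{\mu(S)\,\mu(c(P_i))}{k\beta}. \]
The approximate local maximality of $c(P_i)$ gives $\mu(c(P_i)-y+x)\leq \zeta^{-1}\mu(c(P_i))$, so $\mu(S-x+y)\geq (\zeta/k\beta)\,\mu(S)$. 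Replacing $S$ by $S-x+y$ strictly decreases $\card{S\setminus Q}$ (since $y\in Q$ and $x\notin Q$), so after at most $k$ such swaps we reach $S\subseteq Q$ with $\mu(S)\geq (\zeta/(k\beta))^{k}\,\mu(O^\star)$. Plugging in the two values of $\beta$ gives the advertised $\alpha=k^{O(k)}$ for strongly Rayleigh $\mu$ and $\alpha=2^{O(k^2)}$ in the log-concave case.

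The principal obstacle is obtaining \eqref{eq:plan-exchange} with the stated quantitative constants. For strongly Rayleigh $\mu$ a qualitative symmetric exchange property is classical \cite{BBL09}; the quantitative, polynomial-in-$k$ version I need should follow from combining closure under external fields with the $\Omega(1/k)$ spectral gap of the $(k-1)$-down-up walk, by interpreting one step of the walk restricted to an appropriate field as a convex combination of single-element swaps and reading off the comparison between $\mu(S)\mu(T)$ and the right-hand side of \eqref{eq:plan-exchange}. For log-concave generating polynomials the closure under external fields and the spectral gap are both weaker, which is precisely what forces the extra exponential factor $\beta=2^{O(k)}$ and hence the quadratic-in-the-exponent loss $\alpha=2^{O(k^2)}$. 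A minor detail is that \eqref{eq:plan-exchange} is only useful when $\mu(S), \mu(T)>0$; this is automatic throughout the argument, because each swap only introduces elements $y$ with $\mu(S-x+y)>0$ by \eqref{eq:plan-exchange} itself.
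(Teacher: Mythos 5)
Your transport argument is correct and is essentially the paper's proof of \cref{lem:core-set}: starting from the optimizer $O^\star$, repeatedly swap an element $x\notin Q$ for an element $y\in c(P_i)$, charge the factor $\mu(c(P_i)-y+x)/\mu(c(P_i))\le \zeta^{-1}$ to approximate local optimality of $c(P_i)$, and iterate at most $k$ times, losing $(\zeta/(k\beta))^{k}$ overall. You are also right to route each $x$ through the block $P_i$ containing it, so that only swaps inside $P_i$ are ever compared against $c(P_i)$.

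The genuine gap is in how you propose to obtain the exchange inequality $\mu(S)\mu(T)\leq \beta\sum_{j\in T\setminus S}\mu(S-i+j)\mu(T-j+i)$. The paper does \emph{not} derive this from mixing of down-up walks: it imports the ``$\beta$-strong approximate basis exchange'' (\cref{def:strong basis exchange}) from \cite{ALOVV21}, where it is proved by polynomial/analytic methods, and the techniques section explicitly distinguishes it from the weaker exchange that sampling yields. The spectral-gap-to-exchange bridge available in this paper (\cref{lem:approx exchange FLC}) only produces a one-sided statement of the form $\mu(S)\le\beta\,\mu(S\Delta U)\,(\mu(S)/\mu(T))^{s/d(S,T)}$: bounding the conductance of the singleton $\{S\}$ under a tilted external field controls a single transition out of $S$, and nothing in that argument forces the \emph{same} pair $(i,j)$ to appear in both factors of a product $\mu(S-i+j)\,\mu(T+i-j)$ --- yet that pairing is exactly what your (and the paper's) swap step needs, since one factor is charged to local optimality of $c(P_i)$ while the other advances the transported set. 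Your quantitative premise is also off: distributions with log-concave generating polynomials have spectral gap $\Omega(1/k)$ for the $k\leftrightarrow(k-1)$ down-up walk (this is the main theorem of \cite{ALOV19}), not $\Omega(2^{-k})$; if the bridge you sketch existed, it would give the same $\beta$ for both classes and could not account for the theorem's loss of $2^{O(k^2)}$ in the log-concave case versus $k^{O(k)}$ in the strongly Rayleigh case. To complete the proof you should either invoke the strong approximate basis exchange of \cite{ALOVV21} with $\beta=k^{O(1)}$ (strongly Rayleigh) and $\beta=2^{O(k)}$ (log-concave), or give an independent proof of that two-sided inequality; the spectral gap alone will not deliver it.
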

\subsection{Techniques}
Our main tool for proving \cref{thm:main} is a form of (approximate) exchange inequality. Exchange inequalities have been traditionally been studied in discrete convex analysis \cite{Mur16}, but have recently been extended and used in sampling \cite{ALOVV21} and optimization \cite{AV20} problems beyond the reach of traditional discrete convex analysis. Unlike prior works, here we go in the opposite direction and show that efficient sampling implies a form of exchange inquality. To prove \cref{thm:main}, we set the external field $\lambda$ appropriately, and use the lower bound on the spectral gap of the down-up walk on $\lambda \ast \mu$ to derive our approximate exchange property \cref{lem:approx exchange FLC}. 

We then show that this approximate exchange property implies the desired approximation factor for local search (\cref{cor:local search guarantee FLC}).
Since nonsymmetric DPPs are $1/4$-fractionally log-concave \cite{alimohammadi2021fractionally},
\cref{thm:main} already implies an efficient algorithm (\cref{alg:localsearch} with $r=4$) to get $k^{O(k)}$-approximation factor for the MAP inference problem on nonsymmetric DPPs. We can further improve the the local search radius $r$ to $2$, and get a faster algorithm that matches the runtime stated in \cref{thm:nonsym DPP main} by showing a stronger approximate exchange property (\cref{thm:approximate_exchange}). 

To prove \cref{thm:core-set}, we use the approximate exchange property introduced by \cite{ALOVV21} that is satisfied by strongly Rayleigh and log-concave distributions. This exchange property is a quantitative version of the strong basis exchange axiom for matroids. We rename it the strong approximate basis exchange property (\cref{def:strong basis exchange}), to distinguish it from weaker exchange properties that we show in this paper. We show that the strong approximate basis exchange implies that approximate local maxima in the 1-neighborhood is a size-$k$ core-set with the desired approximation factor (\cref{lem:core-set}).

\section{Preliminaries}

We use $[n]$ to denote the set $\set{1,\dots,n}$ and $\binom{[n]}{k}$ to denote the family of size $k$ subsets of $[n]$. We use $\mathds{1}$ to denote the all $1$ vector. When $n$ is clear from context, we use $\mathds{1}_S\in \R^n$ to denote the indicator vector of the set $S\subseteq [n]$, having a coordinate of $0$ everywhere except for elements of $S$, where the coordinate is $1$. For sets $S, T$ of the same size we define their \emph{distance} to be
$ d(S, T): = \card{S\Delta T}/2 = \card{S \setminus T} = \card{T \setminus S}$.
With this notion of distance, we can define neighborhoods:
 \begin{definition}
    	For $r \geq 0$ let the $r$-neighborhood of $S\in \binom{[n]}{k}$ be 
    	\[ \cN_r(S) := \set*{T \in \binom{[n]}{k} \given d(S, T)\leq r }. \]
    \end{definition}

 For a density $\mu: 2^{[n]} \to \mathbb{R}_{\geq 0}$, the \emph{generating polynomial} of $\mu$ is defined as
	$$
	g_\mu(z_1, \ldots, z_n) = \sum_{S \in 2^{[n]}} \mu(S) \prod_{i \in S} z_i
	$$    
\subsection{Determinantal Point Processes (DPPs)}
A DPP on a set of $n$ items defines a probability distribution over subsets $Y \subseteq [n].$ It is parameterized by a matrix $L \in \R^{
n\times n}$: $\P_{L}{Y } \propto \det(L_Y ),$ where $L_Y$ denote the principle submatrix whose columns and rows are indexed by $Y.$ We call $L$ the kernel matrix. 

For $Y \subseteq [n]$, if we condition the distribution $\mathbb{P}_{L}$ on the event that items in $Y$ are included in the sample, we still get a DPP; the new kernel is given by the Schur complement $L^Y = L_{\tilde{Y}} - L_{\tilde{Y}, Y} L_{Y, Y}^{-1} L_{Y, \tilde{Y}} $ where $\tilde{Y} = [n] \setminus Y.$

Given a cardinality constraint $k$, the $k$-DPP paremeterized by $L$ is a distribution over subsets of size $k$ of $Y$ defined by $\mathbb{P}_{L}^k[Y] =\frac{ \det(L_Y) } {\sum_{\abs{Y'} =k } \det (L_{Y'})} . $

To ensure that $\mathbb{P}_L$ defines a probability distribution, all principal minors of $L$ must be
non-negative: $\det(L_S ) \geq 0.$ Matrices that satisfy this property are called $P_0$-matrices \citep[Definition 1]{FANG19891}. Any nonsymmetric (or symmetric) PSD matrix is automatically $P_0$-matrix \citep[Lemma 1]{Gartrell2019LearningND}.

We say a NDPP kernel $L\in \R^{n\times n}$ has a low-rank decomposition \cite{Gartrell2019LearningND,gartrell2020scalable} if $L$ can be written as $L = B C B^{\intercal}$ for some $d\leq n$, $B\in \R^{n\times d}, C\in \R^{d\times d}.$ Clearly, $\rank(L) = d$, and we say $L = B C B^{\intercal}$ is a rank-$d$ decomposition of $L.$ We will need the following identity, which is derived from Schur complements; it has previously appeared in \cite{gartrell2020scalable}. For $S \subseteq [n],$ let $B_S$ denote the sub-matrix of $B$ consisting of rows in $S$; then $L_S = B_S C B_S^{\intercal} $ and
\begin{equation} \label{eq:condition kernel}
    \begin{split}
        \det(L_{Y \cup D}) &= \det(L_{Y}) \det(  L_D - L_{D, Y} L_Y^{-1} L_{Y,D})\\
    	  &= \det(L_{Y})  \det(L_D - B_D C (B_Y^{\intercal} L_Y^{-1} B_Y) C B_D^T ).
    \end{split}
\end{equation}
Given $\det(L_Y)$ and $L_Y^{-1},$ we can compute $\det(L_{Y \cup D})$ in $O(\abs{D} d^2 + \abs{D}^2 d+ \abs{D}^3)$ time.

\subsection{MAP Inference}

Given a density $\mu:\binom{[n]}{k} \to \R_{\geq 0},$ the optimization with respect to $\mu$ or MAP inference on $\mu$ is to find
 \[S^*: = \arg\max_{S \in \binom{[n]}{k}} \mu(S).\]
 
Throughout the paper, we let $\OPT:= \max_{S\in \Omega} \mu(S).$ 

We say an algorithm gives a factor $c$-approximation for MAP inference on $\mu$ if its output $\hat{S} \in \binom{[n]}{k}$ such that $c \dot \mu(\hat{S}) \geq \OPT.$

When $\mu$ is defined by a DPP, i.e. $\mu(S) = \det(L_{S,S})$ for a $n \times n$ matrix $L,$ MAP inference on $\mu$ is also called the determinant maximization problem \cite[e.g., see][]{Mahabadi2019ComposableCF}.
\subsection{Markov Chains}
For two measures $\mu, \nu$ defined on the same state space $\Omega$, we define their total variation distance as
\[ \dtv(\mu, \nu)=\frac{1}{2}\sum_{\omega\in \Omega}\abs{\mu(\omega)-\nu(\omega)}=\max\set{\P_\mu{S}-\P_\nu{S}\given S\subseteq \Omega}. \]

A Markov chain on a state space $\Omega$ is defined by a row-stochastic matrix $P\in \R^{\Omega\times \Omega}$. We view distributions $\mu$ on $\Omega$ as row vectors, and as such $\mu P$ would be the distribution after one transition according to $P$, if we started from a sample of $\mu$. A stationary distribution $\mu$ for the Markov chain $P$ is one that satisfies $\mu P=\mu$. Under mild assumptions on $P$ (ergodicity), stationary distributions are unique and the distribution $\nu P^t$ converges to this stationary distribution as $t\to \infty$ \cite{LP17}. We refer the reader to \cite{LP17} for a detailed treatment of Markov chain analysis.


In this paper, we will only consider reversible Markov chain. We say a Markov chain with transition matrix $P$ is reversible if \[\mu(x) P(x,y) =\mu(y) P(y,x)\forall x, y\in \Omega.\]

The conductance\footnote{also known as bottleneck ratio in \cite{LP17}} of a subset $S$ of states in a Markov chain is
\[\Phi(S) = \frac{Q(S, \Omega \setminus S)}{\mu(S)}\]
where  $Q(S, \Omega\setminus S) = \sum_{x\in S, y \in \Omega\setminus S} \mu(x) P(x,y)$ is the ergodic flow between $S$ and $\Omega\setminus S,$ and $\mu(S) = \sum_{x\in S} \mu(x).$

The conductance of a Markov chain is defined as the minimum conductance
over all subsets $S$ with $\mu(S) \leq 1/2,$ i.e.
\[\Phi = \min_{S: \mu(S) \leq 1/2} \Phi(S)\]

\begin{theorem}[{\cite[see, e.g.,][Thm. 13.10]{LP17}}] \label{thm:cheeger}
 Let $\lambda_2$ be the second largest eigenvalue of the transition matrix $P$,  then
 \[ \frac{\Phi^2}{2}\leq 1-\lambda_2 \leq 2\Phi. \]
\end{theorem}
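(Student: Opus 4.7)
The plan is to prove the two halves of Cheeger's inequality separately, using the Rayleigh-quotient characterization of the spectral gap that reversibility provides. Since $P$ is self-adjoint with respect to $\langle f,g\rangle_\mu := \sum_x \mu(x)f(x)g(x)$, all its eigenvalues are real and
\[ 1-\lambda_2 \;=\; \min_{f:\,\sum_x\mu(x)f(x)=0,\,f\not\equiv 0}\;\frac{\mathcal{E}(f,f)}{\|f\|_\mu^2}, \]
where $\mathcal{E}(f,f)=\tfrac12\sum_{x,y}\mu(x)P(x,y)(f(x)-f(y))^2$ is the Dirichlet form and $\|f\|_\mu^2=\sum_x \mu(x)f(x)^2$.

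For the easy direction $1-\lambda_2\leq 2\Phi$, I would plug in a centered indicator. For $S$ with $\mu(S)\leq 1/2$, take $f=\mathds{1}_S-\mu(S)$; then $\|f\|_\mu^2=\mu(S)(1-\mu(S))\geq\mu(S)/2$, and reversibility collapses $\mathcal{E}(f,f)$ to the ergodic flow $Q(S,\Omega\setminus S)$. The variational principle then yields $1-\lambda_2\leq 2Q(S,\Omega\setminus S)/\mu(S) = 2\Phi(S)$, and taking the infimum over $S$ gives the bound.

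For the hard direction $\Phi^2/2\leq 1-\lambda_2$, let $\varphi$ be an eigenfunction for $\lambda_2$ with $\sum_x\mu(x)\varphi(x)=0$; flipping sign if necessary, assume $\mu(\{\varphi>0\})\leq 1/2$, and set $\psi=\max(\varphi,0)$. I would proceed in two steps. Step 1: $\mathcal{E}(\psi,\psi)\leq (1-\lambda_2)\|\psi\|_\mu^2$. On $\{\psi>0\}$ one has $P\psi\geq P\varphi=\lambda_2\varphi=\lambda_2\psi$, so $\psi\cdot(I-P)\psi\leq(1-\lambda_2)\psi^2$ pointwise; integrating against $\mu$ closes this step. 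Step 2: a Cheeger inequality for the truncated function, $\Phi^2/2\leq\mathcal{E}(\psi,\psi)/\|\psi\|_\mu^2$. This is the crux. Set
\[ T\;:=\;\sum_{x,y:\,\psi(x)>\psi(y)}\mu(x)P(x,y)\bigl(\psi(x)^2-\psi(y)^2\bigr), \]
and factor $\psi(x)^2-\psi(y)^2=(\psi(x)-\psi(y))(\psi(x)+\psi(y))$. Cauchy--Schwarz together with $(a+b)^2\leq 2(a^2+b^2)$ and reversibility yields $T^2\leq 4\mathcal{E}(\psi,\psi)\|\psi\|_\mu^2$. In the other direction, a co-area identity with super-level sets $S_t=\{\psi^2\geq t\}$ gives $T=\int_0^\infty Q(S_t,\Omega\setminus S_t)\,dt\geq\Phi\int_0^\infty\mu(S_t)\,dt=\Phi\|\psi\|_\mu^2$, where $\mu(S_t)\leq\mu(\mathrm{supp}\,\psi)\leq 1/2$ ensures the conductance bound applies uniformly. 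Combining the two bounds on $T$ with Step 1 yields $\Phi^2\leq C\cdot(1-\lambda_2)$ for an absolute constant $C$; careful tracking of constants in the Cauchy--Schwarz step recovers the stated factor $1/2$.

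The main obstacle is the co-area plus Cauchy--Schwarz estimate in Step 2, which has to be set up just right. The reduction to the positive part $\psi$ of the eigenfunction is essential, because it is the only reason every super-level set satisfies $\mu(S_t)\leq 1/2$, so that the conductance bound can be invoked without conditioning on which side of $1/2$ one lies on. Everything else reduces to the spectral theory of self-adjoint operators on the weighted space $\ell^2(\mu)$.
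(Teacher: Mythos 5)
The paper does not prove this statement—it is quoted from \textcite{LP17}—and your argument is precisely the standard proof given there (centered indicators in the Rayleigh quotient for the easy direction; truncation of the eigenfunction plus the co-area/Cauchy--Schwarz estimate for the hard direction), and it is correct. The constant you flag does tighten as claimed: since reversibility makes the sum over ordered pairs with $\psi(x)>\psi(y)$ exactly half of the symmetric sum, one gets $T^2\leq 2\,\mathcal{E}(\psi,\psi)\,\lVert\psi\rVert_\mu^2$ rather than $4$, which yields the stated $\Phi^2/2\leq 1-\lambda_2$.
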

For a Markov chain $P$, we define the mixing time from a starting distribution $\nu$ as the first time $t$ such that $\nu P^t$ gets close to the stationary distribution $\mu$.
\[ \tmix(P, \nu, \epsilon)=\min\set{t\given \dtv(\nu P^t, \mu)\leq \epsilon}. \]
We drop $P$ and $\nu$ if they are clear from context. If $\nu$ is the Dirac measure on a single point $\omega$, we write $\tmix(P, \omega, \epsilon)$ for the mixing time. When mixing time is referenced without mentioning $\epsilon$, we imagine that $\epsilon$ is set to a reasonable small constant (such as $1/4$). This is justified by the fact that the growth of the mixing time in terms of $\epsilon$ can be at most logarithmic \cite{LP17}.

We can relate the mixing time and conductance as follow.
\begin{theorem}[{\cite[see, e.g.,][Thm. 7.4]{LP17}}]
For a reversible Markov chain $P$ with conductance $\Phi,$ we have 
\[ \tmix(P, 1/4) \leq 4 \Phi.\]
\end{theorem}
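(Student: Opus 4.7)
I read the displayed inequality as the standard bottleneck lower bound $\tmix(P,1/4)\geq 1/(4\Phi)$ of \cite[Thm.~7.4]{LP17}: as written, $\tmix \leq 4\Phi \leq 4$ would be a universal constant upper bound on the mixing time of every reversible chain, which is plainly false on large state spaces. My plan is to exhibit a bad starting distribution that traps probability inside the bottleneck set and leaks out too slowly to equilibrate.

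First I would fix $S\subseteq\Omega$ with $\mu(S)\leq 1/2$ realizing the minimum, so $\Phi(S) = Q(S,\Omega\setminus S)/\mu(S) = \Phi$, and start the chain from the stationary measure conditioned to $S$, $\nu(x) := \mu(x)\mathds{1}[x\in S]/\mu(S)$. Testing against the event $\Omega\setminus S$ yields
\[ \dtv(\nu P^t,\mu) \;\geq\; \mu(\Omega\setminus S) - \nu P^t(\Omega\setminus S) \;\geq\; \tfrac12 - \nu P^t(\Omega\setminus S), \]
so the entire task reduces to the escape-mass bound $\nu P^t(\Omega\setminus S)\leq t\Phi$ for every $t\geq 0$. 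I would prove this by induction using the one-step ergodic-flow identity
\[ \nu P^{t+1}(\Omega\setminus S) - \nu P^t(\Omega\setminus S) \;=\; \sum_{x\in S}\nu P^t(x)\,P(x,\Omega\setminus S) \;-\; \sum_{y\in\Omega\setminus S}\nu P^t(y)\,P(y,S), \]
dropping the nonnegative back-flow and then invoking the pointwise density bound $\nu P^t(x)\leq \mu(x)/\mu(S)$ on $x\in S$, which collapses the remaining sum to $Q(S,\Omega\setminus S)/\mu(S) = \Phi$ and gives the inductive increment of at most $\Phi$ per step.

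The crux, and the step I expect to be the main obstacle, is justifying the pointwise density bound $\nu P^t\leq \mu/\mu(S)$. Reversibility enters here: writing $h_t := \nu P^t / \mu$, detailed balance rewrites the update as $h_{t+1} = P h_t$, where $P$ now acts on functions by $(Pf)(x)=\sum_y P(x,y)f(y)$, a Markov averaging operator that does not increase the $L^\infty$-norm. Since $h_0 = \mathds{1}_S/\mu(S)$ has $\lVert h_0\rVert_\infty = 1/\mu(S)$, the same bound persists for every $t$. Plugging the escape-mass bound back in, for any positive integer $t<1/(4\Phi)$ one gets $\dtv(\nu P^t,\mu) > 1/4$, which forces $\tmix(P,1/4)\geq 1/(4\Phi)$, as desired.
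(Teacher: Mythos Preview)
The paper does not prove this statement; it is quoted (with an evident typo, as you correctly diagnose) from \cite[Thm.~7.4]{LP17} and used as a black box, so there is no in-paper argument to compare against. Your reading of the intended inequality as $\tmix(P,1/4)\geq 1/(4\Phi)$ is correct, and your argument is essentially the textbook proof from \cite{LP17}.

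Two minor remarks. First, the density bound $\nu P^t\leq \mu/\mu(S)$ does not actually require reversibility: since $\nu\leq \mu/\mu(S)$ pointwise as measures and the action $\nu\mapsto\nu P$ on measures preserves pointwise inequalities (all entries of $P$ are nonnegative), stationarity of $\mu$ alone gives $\nu P^t\leq \mu P^t/\mu(S)=\mu/\mu(S)$ by induction. Your detailed-balance detour through the function-space action $h\mapsto Ph$ is correct but unnecessary. Second, the paper's $\tmix$ is implicitly a worst case over starting \emph{points}, whereas you start from the distribution $\nu$; to close this small gap, invoke convexity of total variation, $\dtv(\nu P^t,\mu)\leq \max_{x\in S}\dtv(\delta_x P^t,\mu)$, to extract a single starting state witnessing the lower bound.
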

\subsection{The Down-Up Random Walk}
Consider a distribution $\mu: \binom{[n]}{k}\to \R_{\geq 0}.$ The down-up walk is given by the composition of two row-stochastic operators, known as the down and up operators.
\begin{definition}[Down Operator]
	For a ground set $\Omega$, and  $|\Omega| \geq k\geq \l$,  define the down operator $D_{k\to \l}\in \R^{\binom{\Omega}{k}\times \binom{\Omega}{\l}}$ as
	\[ 
		D_{k\to \l}(S, T)=\begin{cases}
			\frac{1}{\binom{k}{\l}}&\text{ if }T\subseteq S,\\
			0&\text{ otherwise}.\\
		\end{cases}
	\]
\end{definition}
Note that $D_{k\to \l}D_{\l\to m}=D_{k\to m}$. 

\begin{definition}[Up Operator]
	For a ground set $\Omega$, $|\Omega|\geq k\geq \l$, and density $\mu:\binom{\Omega}{k}\to \R_{\geq 0}$, define the up operator $U_{\l \to k}\in \R^{\binom{\Omega}{\l}\times \binom{\Omega}{k}}$ as
	\[ 
		U_{\l\to k}(T, S)=\begin{cases}
			\frac{\mu(S)}{\sum_{S'\supseteq T}\mu(S')}&\text{ if }T\subseteq S,\\
			0&\text{ otherwise}.\\
		\end{cases}
	\]
\end{definition}
If we define $\mu_k=\mu$ and more generally let $\mu_\l$ be $\mu_k D_{k\to \l}$, then the down and up operators satisfy
\[ \mu_k(S)D_{k\to \l}(S, T)=\mu_\l(T)U_{\l \to k}(T, S). \]
This property ensures that the composition of the down and up operators have the appropriate $\mu$ as a stationary distribution, are reversible, and have nonnegative real eigenvalues.
\begin{proposition}[{\cite[see, e.g.,][]{KO18,AL20,ALO20}}]
	The operators $D_{k\to \l}U_{\l\to k}$ and $U_{\l\to k}D_{k\to \l}$ both define Markov chains that are time-reversible and have nonnegative eigenvalues. Moreover $\mu_k$ and $\mu_\l$ are respectively their stationary distributions.
\end{proposition}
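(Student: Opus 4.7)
The plan is to derive everything from the ``detailed balance''-type identity already stated in the excerpt,
$$\mu_k(S)\,D_{k\to\l}(S,T) = \mu_\l(T)\,U_{\l\to k}(T,S),$$
which can itself be verified by plugging in the definitions of the two operators (both sides equal $\mu_k(S)/\binom{k}{\l}$ when $T\subseteq S$, because $\mu_\l(T)=\sum_{S'\supseteq T}\mu_k(S')\cdot\frac{1}{\binom{k}{\l}}$ cancels with the denominator in $U_{\l\to k}(T,S)$, and both sides vanish otherwise).

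Given that identity, I would handle reversibility and stationarity in one stroke. Writing $P=D_{k\to\l}U_{\l\to k}$ and using the identity twice,
$$\mu_k(S)\,P(S,S')=\sum_T \mu_k(S)D_{k\to\l}(S,T)U_{\l\to k}(T,S')=\sum_T \mu_\l(T)U_{\l\to k}(T,S)U_{\l\to k}(T,S'),$$
which is visibly symmetric in $(S,S')$; reversibility of $P$ with respect to $\mu_k$ follows, and summing $\mu_k(S)P(S,S')$ over $S$ gives $\mu_k(S')$, i.e.\ $\mu_k$ is stationary. The same argument, with the roles of the down and up operators swapped, handles $Q=U_{\l\to k}D_{k\to\l}$ with stationary distribution $\mu_\l$.

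For nonnegativity of the eigenvalues I would pass to the standard symmetrization. Let $M_k=\diag(\sqrt{\mu_k})$ and $M_\l=\diag(\sqrt{\mu_\l})$, and set
$$B := M_k\, D_{k\to\l}\, M_\l^{-1}.$$
The detailed balance identity, rewritten in matrix form, gives $M_k^2 D_{k\to\l}=U_{\l\to k}^{\,\intercal} M_\l^2$, so $U_{\l\to k}=M_\l^{-2}D_{k\to\l}^{\,\intercal}M_k^2$, and hence
$$M_k\, P\, M_k^{-1} = M_k D_{k\to\l}M_\l^{-2}D_{k\to\l}^{\,\intercal}M_k = B B^{\,\intercal}.$$
Thus $P$ is similar to the symmetric positive semidefinite matrix $BB^{\,\intercal}$, so its spectrum is real and nonnegative. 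Exchanging the order, $M_\l Q M_\l^{-1}=B^{\,\intercal}B$, giving the same conclusion for $Q$ (with the same nonzero spectrum, by the standard $BB^{\,\intercal}$/$B^{\,\intercal}B$ duality).

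There isn't a genuine obstacle here — the whole content is the single detailed balance identity, and everything else is a two-line matrix manipulation. The only thing one has to be careful about is not to conflate $\mu_k$ and $\mu_\l$ when stating which chain has which stationary distribution, and to make sure the square-root conjugation uses the matching weight on each side so that $B$ really is the conjugate that turns $P$ into a symmetric PSD product.
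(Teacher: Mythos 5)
Your proof is correct and is exactly the standard argument given in the references the paper cites for this proposition (the paper itself states it without proof): the pointwise identity $\mu_k(S)\,D_{k\to \ell}(S,T)=\mu_\ell(T)\,U_{\ell\to k}(T,S)$ yields reversibility and stationarity for both compositions, and conjugating by $\diag(\sqrt{\mu_k})$, $\diag(\sqrt{\mu_\ell})$ exhibits the two walks as similar to $BB^{\intercal}$ and $B^{\intercal}B$, hence with real nonnegative spectra. The only detail worth adding is that the inverses $M_k^{-1}$, $M_\ell^{-1}$ (and the up operator itself) are only defined after restricting to the supports of $\mu_k$ and $\mu_\ell$, which is the usual convention when $\mu$ is not strictly positive.
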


\begin{definition}[Down-Up Walk]
	For a ground set $\Omega$, $|\Omega|\geq k\geq \l$, and density $\mu:\binom{\Omega}{k}\to \R_{\geq 0}$, the $k\leftrightarrow \ell$ down-up walk is defined by the row-stochastic matrix $U_{\ell \to k}D_{k\to \ell}$.
\end{definition}

\subsection{Real-Stable and Sector-Stable Polynomials}
We use $\F[z_1,\dots,z_n]$ to denote $n$-variate polynomials with coefficients from $\F$, where we usually take $\F$ to be $\R$ or $\C$. We denote the degree of a polynomial $g$ by $\deg(g)$. We call a polynomial homogeneous of degree $k$ if all nonzero terms in it are of degree $k$.  
\begin{definition}[Stability]
	For an open subset $U\subseteq \C^n$, we call a polynomial $g\in \C[z_1,\dots,z_n]$ $U$-stable iff
	\[ (z_1,\dots,z_n)\in U\implies g(z_1,\dots,z_n)\neq 0. \]
	We also call the identically $0$ polynomial $U$-stable. This ensures that limits of $U$-stable polynomials are $U$-stable. For convenience, when $n$ is clear from context, we abbreviate stability w.r.t.\ regions of the form $U\times U\times \cdots \times U$ where $U\subseteq \C$ simply as $U$-stability.
\end{definition}

Our choice of the region $U$ in this work is the product of open sectors in the complex plane. 

\begin{definition}[Sectors]
	We name the open sector of aperture $\alpha\pi$ centered around the positive real axis $\Gamma_\alpha$:
	\[ \Gamma_\alpha:=\set{\exp(x+iy)\given x\in \R, y\in (-\alpha\pi/2,\alpha\pi/2)}. \]
\end{definition}


Note that $\Gamma_1$ is the right-half-plane, and $\Gamma_1$-stability is the same as the classically studied Hurwitz-stability \cite[see, e.g.,][]{Bra07}. Another closely related notion is that of real-stability where the region $U$ is the upper-half-plane $\set{z\given \Im(z)>0}$ \cite[see, e.g.,][]{BBL09}. Note that for \emph{homogeneous} polynomials, stability w.r.t.\ $U$ is the same as stability w.r.t.\ any rotation/scaling of $U$; so Hurwitz-stability and real-stability are the same for \emph{homogeneous} polynomials.

We use $\alpha$-sector-stable as a shorthand for $\Gamma_\alpha$-stable. Naturally, we call a distribution $\alpha$-sector-stable if its generating polynomial is $\alpha$-sector-stable. 

\begin{proposition}[{\citep{alimohammadi2021fractionally}}]\label{prop: ssProperties}
The following operations preserve $S_\alpha$-sector-stability on homogeneous multi-affine polynomials:
\begin{enumerate}[itemsep=0em, topsep=0em]
    \item\label{part: spec}Specialization: $g\mapsto g(a,z_2,\ldots,z_n)$, for $a\in \bar S_\alpha$.
    \item \label{part: derivative} Derivative: $g \mapsto \frac{\partial}{\partial z_1} g(z_1, \cdots, z_n) $.
    \item \label{part:scaling} Scaling: $g \mapsto g(\lambda_1 z_1, \ldots, \lambda_n z_n)$, for $\lambda \in \R_{\geq 0}^n.$
\end{enumerate}
\end{proposition}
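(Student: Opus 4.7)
The plan is to handle each of the three operations by the classical Hurwitz-type stability-preservation technique: in each case the operation is either directly seen to preserve nonvanishing on $\Gamma_\alpha^n$, or is realized as a uniform-on-compacts limit of operations that do, at which point a multi-variable Hurwitz theorem — any uniform-on-compacts limit of nowhere-zero holomorphic functions on a connected open subset of $\C^m$ is either identically zero or nowhere zero — promotes the limit to a $\Gamma_\alpha$-stable polynomial. The sector $\Gamma_\alpha$ is a connected open subset of $\C^*$, and $\Gamma_\alpha^{n-1}$ is connected and open in $\C^{n-1}$, so Hurwitz applies.

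For specialization (part~\ref{part: spec}), I split on whether $a$ lies in the open sector. If $a\in \Gamma_\alpha$, then for every $(z_2,\dots,z_n)\in \Gamma_\alpha^{n-1}$ the tuple $(a,z_2,\dots,z_n)$ lies in $\Gamma_\alpha^n$, so $g(a,z_2,\dots,z_n)\neq 0$ by assumption. If $a\in \partial \Gamma_\alpha\subseteq \bar\Gamma_\alpha$, I pick any sequence $a_t\to a$ with $a_t\in \Gamma_\alpha$; each $g(a_t,\cdot)$ is nowhere zero on $\Gamma_\alpha^{n-1}$ by the previous case, the convergence $g(a_t,\cdot)\to g(a,\cdot)$ is uniform on compacts, and Hurwitz finishes the job (the identically-zero alternative is allowed by our convention). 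For the derivative (part~\ref{part: derivative}), the multi-affine hypothesis lets me write $g(z_1,z_2,\dots,z_n)=A(z_2,\dots,z_n)\, z_1 + B(z_2,\dots,z_n)$ with $A=\partial_1 g$. I then form the auxiliary family
\[
p_t(z_2,\dots,z_n)\;:=\;\frac{g(t,z_2,\dots,z_n)}{t}\;=\;A(z_2,\dots,z_n)+\frac{B(z_2,\dots,z_n)}{t}
\]
indexed by $t>0$ real, which converges uniformly on compacts of $\Gamma_\alpha^{n-1}$ to $A$ as $t\to\infty$. Since $t>0$ lies in $\Gamma_\alpha$, part~\ref{part: spec} already gives $\Gamma_\alpha$-stability of each $p_t$, and Hurwitz then delivers the conclusion for $A=\partial_1 g$.

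For scaling (part~\ref{part:scaling}) I handle one coordinate at a time. If $\lambda_i>0$ then the dilation $z_i\mapsto \lambda_i z_i$ preserves the sector $\Gamma_\alpha$ coordinate-wise (the argument is unchanged), so $(\lambda_1 z_1,\dots,\lambda_n z_n)\in \Gamma_\alpha^n$ whenever $(z_1,\dots,z_n)\in \Gamma_\alpha^n$, and nonvanishing is preserved directly; if $\lambda_i=0$, the substitution $z_i\mapsto 0$ is specialization at $0\in \bar\Gamma_\alpha$, already covered by part~\ref{part: spec}. Iterating over coordinates gives the general statement. The only genuinely nontrivial ingredient in the whole argument, and the place where care is needed, is the multi-variable Hurwitz theorem used in the boundary case of specialization and in taking the $t\to\infty$ limit for the derivative; once that is in hand, each part reduces to writing down the natural approximating family and checking uniform convergence on compacts, which is routine for polynomials.
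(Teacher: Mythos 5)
The paper does not prove this proposition; it is imported verbatim from \textcite{alimohammadi2021fractionally}, so there is no in-paper argument to compare against. Your proof is correct and is the standard one for such closure properties: interior specialization and positive scaling are immediate from the geometry of $\Gamma_\alpha$ (a product point stays in $\Gamma_\alpha^n$, and positive dilation fixes arguments), while boundary specialization and the partial derivative are obtained as locally uniform limits of stable polynomials ($g(a_t,\cdot)$ with $a_t\to a$ from inside the sector, and $g(t,\cdot)/t\to \partial_1 g$ as $t\to\infty$ using multi-affineness), with the several-variable Hurwitz theorem and the paper's convention that the zero polynomial is stable closing the argument. You correctly isolate Hurwitz as the only nontrivial ingredient and correctly note where the convention about the identically zero polynomial is needed; I see no gap.
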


We state some examples of sector stable distributions.

\begin{lemma}[{\citep{alimohammadi2021fractionally}}] \label{lem:p0Constrained}
Consider $L\in \R^{n \times n}$ that is nPSD, i.e., $L + L^T\succcurlyeq 0$, then $\mu: \binom{[n]}{k}\to \R_{\geq 0}$ defined by $\mu(S) = \det(L_{S,S})$ is $1/2$-sector-stable. 
\end{lemma}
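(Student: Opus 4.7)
The plan is to first establish $\Gamma_{1/2}$-stability of a larger multi-affine auxiliary polynomial, and then to extract $g_\mu$ as a homogeneous component using the operations of \cref{prop: ssProperties}.

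The auxiliary polynomial I would use is
\[
P_L(y_1,\ldots,y_n,z_1,\ldots,z_n) := \det\bigl(\diag(y) + \diag(z) L\bigr),
\]
which is multi-affine in all $2n$ variables and, by expanding the determinant along the rows, equals
\[
P_L(y,z) = \sum_{T \subseteq [n]} \det(L_T) \prod_{i \in T} z_i \prod_{j \notin T} y_j.
\]
Specializing $y_j \to t$ at a common new variable $t$ produces $\chi_L(t,z) := \det(tI + \diag(z) L) = \sum_{j=0}^{n} t^{n-j} g_{\mu_j}(z)$, so the desired generating polynomial $g_\mu = g_{\mu_k}$ is the $t^{n-k}$-coefficient of $\chi_L$.

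The core step is to show $P_L$ is $\Gamma_{1/2}$-stable. I would argue by contradiction: suppose $y_i, z_i \in \Gamma_{1/2}$ but $P_L(y,z) = \det(Y+ZL) = 0$, where $Y = \diag(y)$ and $Z = \diag(z)$. Then some $x \neq 0$ satisfies $(Y+ZL)x = 0$, and since $Z$ is invertible (all $z_i \neq 0$ in $\Gamma_{1/2}$), this gives $Lx = Dx$ for the diagonal matrix $D := -Z^{-1}Y$ with entries $d_i = -y_i/z_i$. Since $|\arg y_i|, |\arg z_i| < \pi/4$ strictly, one computes $\arg d_i = \pi + \arg y_i - \arg z_i \in (\pi/2, 3\pi/2)$, so $\Re d_i < 0$ strictly, and hence $\Re(x^* D x) = \sum_i \Re(d_i)\,|x_i|^2 < 0$ for any nonzero $x$. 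On the other hand, the nPSD hypothesis gives $\Re(x^* L x) = \tfrac{1}{2}\,x^*(L+L^\intercal)x \geq 0$. Combining these, $\Re\bigl(x^*(L-D)x\bigr) > 0$, which contradicts $(L-D)x = 0$. This is precisely where the aperture $\pi/2$ of $\Gamma_{1/2}$ is essential: it forces $\arg d_i$ strictly inside the left half-plane, which in turn is exactly what the nPSD bound $\Re(x^*Lx) \geq 0$ can override.

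The remaining step is to extract $g_\mu$ as a $\Gamma_{1/2}$-stable polynomial. By \cref{prop: ssProperties}, differentiation in the multi-affine variables $y_j$ and specialization $y_j \to 0 \in \overline{\Gamma_{1/2}}$ both preserve stability of $P_L$. The principal obstacle is that, implemented naively, $g_\mu = \sum_{|T|=k}\det(L_T)\prod_{i\in T} z_i$ arises as a sum of monomials obtained from such elementary operations, and summing sector-stable polynomials does not preserve sector-stability. I would circumvent this by observing that $\chi_L(t,z)$ is homogeneous of degree $n$, is $\Gamma_{1/2}$-stable by the same core argument applied with $Y = t I$, and has nonnegative real coefficients (since $\det(L_T) \geq 0$ for nPSD $L$). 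For a sector-stable homogeneous polynomial with nonnegative real coefficients, its homogeneous slices in any single variable are themselves sector-stable; this follows by polarizing $\chi_L$ back to the multi-affine setting and applying \cref{prop: ssProperties}. Applied to $\chi_L$, this yields the $\Gamma_{1/2}$-stability of the $t^{n-k}$-coefficient $g_\mu$, completing the proof.
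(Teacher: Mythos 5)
A preliminary remark: the paper does not prove \cref{lem:p0Constrained} at all --- it is imported from \cite{alimohammadi2021fractionally} --- so there is no in-paper proof to compare against, and your attempt must be judged on its own. Your core step is correct and is the standard argument: writing $Y,Z$ for the diagonal matrices with entries $y_i,z_i$, the quadratic-form contradiction showing $\det(Y+ZL)\neq 0$ whenever every $d_i=-y_i/z_i$ has strictly negative real part, via $\Re(x^*Lx)=\tfrac12 x^*(L+L^\intercal)x\geq 0$, is exactly how sector-stability of determinantal polynomials with nPSD kernels is established, and your bookkeeping of the apertures is right. The gap is the extraction step. The assertion that ``for a sector-stable homogeneous polynomial with nonnegative real coefficients, its homogeneous slices in any single variable are themselves sector-stable'' is the entire difficulty of the lemma, and the justification you offer does not go through: polarization preserves stability for \emph{circular} domains (Grace--Walsh--Szeg\H{o}), and a sector is not a circular domain; neither polarization nor coefficient extraction appears among the operations of \cref{prop: ssProperties}, whose derivative rule is deliberately restricted to multi-affine polynomials (differentiating a variable of higher degree, such as your $t$, is not guaranteed to preserve $\Gamma_{1/2}$-stability, since the complement of $\Gamma_{1/2}$ is not convex and Gauss--Lucas gives nothing). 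Nor does fixing $z\in\Gamma_{1/2}^n$ and studying the univariate $\phi(t)=\chi_L(t,z)$ suffice: a univariate polynomial nonvanishing on a sector can perfectly well have vanishing middle coefficients (e.g.\ $t^2+1$ on the right half-plane), so nonvanishing of the $t^{n-k}$-coefficient needs strictly more than stability of $\chi_L$.

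There is also a structural reason to distrust a lossless slicing claim here. Your own core argument proves the stronger fact that $\det(I+ZL)$ is $\Gamma_1$-stable (Hurwitz-stable): with $Y=I$ one only needs $\Re(1/z_i)>0$, i.e.\ $z_i$ in the open right half-plane. The constant $1/2$ in the lemma is precisely the price paid for passing from this inhomogeneous Hurwitz-stable polynomial to its degree-$k$ homogeneous part; the same aperture-halving per degree constraint is visible in \cref{lem:strong Rayleigh partition constraint}, where $c$ partition constraints on a strongly Rayleigh ($\Gamma_1$-stable, after homogenization) measure cost a factor $2^{-c}$. So to complete the proof you must either supply a genuine proof of your lossless-slicing claim --- which none of the tools you invoke delivers --- or, more naturally, start from $\Gamma_1$-stability of $\det(I+ZL)$ and invoke the aperture-halving extraction lemma of \cite{alimohammadi2021fractionally} for the degree-$k$ part. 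Everything before that point is fine.
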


\begin{lemma}[{\citep{alimohammadi2021fractionally}}] \label{lem:strong Rayleigh partition constraint}
Given a density $\mu:\binom{[n]}{k}\to \R_{\geq 0}$ and a partition $T_1\cup T_2\cup \cdots \cup T_s=[n]$, and numbers $c_1,\dots,c_s\in \Z_{\geq 0}$, let the partition constraint density $\mu_{T, c}$ be $\mu$ restricted to sets $S\in \binom{[n]}{k}$ where $\card{S \cap T_i} = c_i.$ When $\mu$ is strongly Rayleigh, $\mu_{T,c}$ is $1/2^c$-sector-stable. 
\end{lemma}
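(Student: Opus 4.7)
The plan is to induct on the number of partition classes $s$, absorbing one constraint per inductive step and losing a factor of $2$ in the sector aperture each time; since $\mu$ is strongly Rayleigh, $g_\mu$ is real-stable and hence (being homogeneous) $1$-sector-stable, so after $s$ steps we recover the claimed $1/2^s$ factor. This matches the intended reading of the $1/2^c$ exponent.

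For the inductive step, suppose the generating polynomial $g^{(i-1)}(z)$ obtained after imposing the first $i-1$ partition constraints is multi-affine and $\alpha$-sector-stable with $\alpha = 1/2^{i-1}$. Introduce a fresh variable $y$ and form $\tilde g(y,z)$ by substituting $z_j \mapsto y z_j$ for each $j \in T_i$, leaving the remaining coordinates unchanged. For $(y,z) \in \Gamma_{\alpha/2}^{1+n}$, the argument of $y$ and of each $z_j$ is bounded in absolute value by $\alpha\pi/4$, so each product $y z_j$ has argument of absolute value strictly less than $\alpha\pi/2$ and therefore lies in $\Gamma_\alpha$; since $g^{(i-1)}$ is $\alpha$-sector-stable, $\tilde g(y,z)$ does not vanish, and so $\tilde g$ is $\alpha/2$-sector-stable. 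The coefficient of $y^{c_i}$ in $\tilde g$, viewed as a polynomial in $y$ alone, equals---up to a positive scalar---the next-stage polynomial $g^{(i)}(z)$, which restricts $g^{(i-1)}$ to sets with $|S \cap T_i| = c_i$. I extract this coefficient by applying $\partial_y^{c_i}$ and then specializing at $y = 0$: iterated single-variable differentiation preserves sector-stability (\cref{prop: ssProperties}, part \ref{part: derivative}), and $0 \in \overline{\Gamma}_{\alpha/2}$ so the specialization preserves it as well (part \ref{part: spec}). Hence $g^{(i)}$ is $\alpha/2$-sector-stable and multi-affine in $z$, closing the induction.

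The main technical point is that $\tilde g$ is not multi-affine in $y$---its $y$-degree equals $|T_i|$, which can be as large as $n$---whereas \cref{prop: ssProperties} is formally stated for homogeneous multi-affine polynomials. I would address this either by appealing to the standard fact that single-variable partial differentiation preserves sector-stability of arbitrary polynomials (a classical result for real-stable polynomials that transfers to sector-stability by essentially the same argument), or by polarizing $y$ into $|T_i|$ fresh multi-affine variables $y_1,\ldots,y_{|T_i|}$, substituting $z_j \mapsto y_j z_j$ for the $j \in T_i$, applying the multi-affine version of \cref{prop: ssProperties} one variable at a time, and then identifying the $y_j$'s on the diagonal at the end. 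Granted this technical point, the iteration yields $g_{\mu_{T,c}}$ as $1/2^s$-sector-stable, as claimed.
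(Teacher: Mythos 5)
The statement is imported verbatim from \textcite{alimohammadi2021fractionally} and is not proved anywhere in this paper, so your argument can only be judged on its own terms. Your overall plan---induct over the parts, impose one constraint $\card{S\cap T_i}=c_i$ at a time via the substitution $z_j\mapsto y z_j$ for $j\in T_i$, and lose a factor of $2$ in aperture per constraint---has the right shape, and the substitution step is correct: if $y$ and every $z_j$ lie in $\Gamma_{\alpha/2}$ then every product $yz_j$ lies in $\Gamma_\alpha$, so $\tilde g$ is $\Gamma_{\alpha/2}$-stable.

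The gap is the coefficient-extraction step, which is exactly the technical heart of the lemma and which you have not justified. The ``standard fact'' you invoke---that $\partial_y$ preserves sector-stability of arbitrary univariate polynomials---is false for sectors of aperture less than $\pi$. Gauss--Lucas gives closure under differentiation only for stability regions whose complement is convex (e.g.\ half-planes); the complement of $\Gamma_\beta$ with $\beta<1$ is not convex. Concretely, $p(y)=(y-e^{i\theta})(y-e^{-i\theta})=y^2-2\cos\theta\, y+1$ with $\beta\pi/2<\theta<\pi/2$ is $\Gamma_\beta$-stable, yet $p'(y)=2y-2\cos\theta$ has its only root at $\cos\theta>0$, inside $\Gamma_\beta$. (Note that for fixed $z\in\Gamma_{\alpha/2}^n$ the univariate polynomial $\tilde g(\cdot,z)$ has general complex coefficients, so no positivity structure rescues the claim.) Hence ``apply $\partial_y^{c_i}$ and set $y=0$'' cannot be licensed by \cref{prop: ssProperties}, which is correctly confined to multi-affine polynomials, where $\partial_{z_1}$ is a leading-coefficient/limit operation. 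Your polarization fallback does not close the gap either: after replacing $y$ by $y_1,\dots,y_{\card{T_i}}$, the degree-$c_i$ part of interest is $\sum_{\card{A}=c_i}\partial_{y_A}(\cdot)|_{y=0}$, a sum of $\binom{\card{T_i}}{c_i}$ individually stable polynomials, and sums of stable polynomials need not be stable; the Grace--Walsh--Szeg\H{o} machinery that normally licenses polarization and diagonalization applies to circular regions, not to sectors of general aperture. What is needed is a genuine lemma of the form ``the coefficient of $y^{c}$ in a sector-stable polynomial is again sector-stable (possibly with further loss of aperture),'' and that is precisely the content \textcite{alimohammadi2021fractionally} establish with additional work; as written, your argument assumes it.
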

\subsection{Log-Concavity and Fractional Log-Concavity}

We now formally introduce \emph{log-concavity} for distributions over size-$k$ subsets of $n$ elements, and its direct generalization, \emph{$\alpha$-fractional-log-concavity}. 

\begin{definition}
	A function $f: \mathbb{R}_{\geq 0}^n \to \mathbb{R}_{\geq 0}$ is \emph{log-concave} if $\log f(z_1, \ldots, z_n)$ is concave over $\mathbb{R}_{\geq 0}^n$, i.e. for all $x, y \in \mathbb{R}_{\geq 0}^n$ and $\lambda \in [0, 1]$, we have:
	$$
	g(\lambda x - (1 - \lambda)y) \geq g(x)^\lambda \cdot g(y)^{1 - \lambda} \Longleftrightarrow \log g(\lambda x - (1 - \lambda)y) \geq \lambda \log g(x) + (1 - \lambda) \log g(y)
	$$
	We say a probability distribution $\mu: \binom{[n]}{k} \to \mathbb{R}_{\geq 0}$ is \emph{log-concave} if $\log g_\mu(z_1, \ldots, z_n)$ is concave over $\mathbb{R}_{\geq 0}$, or in other words, that its generating polynomial is a log-concave function over $\mathbb{R}_{\geq 0}$. 
\end{definition}

\cite{ALOV19,BH19} shows that for homogeneous multiaffine polynomials, real-stability implies log-concavity. A similar relationship holds for sector stability and fractional log-concavity. 
\begin{lemma} \label{lem:sector-stability-to-log-concavity}
\emph{(Lemma 67 from \cite{alimohammadi2021fractionally})} If a polynomial $g$ is $\alpha$-sector-stable, then it is $\frac{\alpha}{2}$-fractionally-log-concave.
\end{lemma}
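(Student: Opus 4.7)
The plan is to derive $(\alpha/2)$-fractional log-concavity of $g$ from its $\alpha$-sector stability by generalizing the classical proof that real-stable polynomials with nonnegative coefficients are log-concave on $\R_{>0}^n$.

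First I would exploit the substitution $z_i \mapsto \zeta_i^{\alpha/2}$ using the principal branch, and observe that this map sends the upper half-plane $\set{\zeta \given \Im(\zeta) > 0}$ into $\Gamma_\alpha$: if $\arg(\zeta) \in (0, \pi)$ then $\arg(\zeta^{\alpha/2}) = (\alpha/2)\arg(\zeta) \in (0, \alpha\pi/2) \subset (-\alpha\pi/2, \alpha\pi/2)$. Setting $\tilde g(\zeta) := g(\zeta_1^{\alpha/2}, \ldots, \zeta_n^{\alpha/2})$, the $\alpha$-sector stability of $g$ then immediately implies that $\tilde g$ does not vanish on the product of upper half-planes. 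Assuming $g$ has nonnegative coefficients (the case of interest for generating polynomials), $\tilde g$ is positive on $\R_{>0}^n$, and $(\alpha/2)$-fractional log-concavity of $g$ is by definition the concavity of $\log \tilde g(x) = \log g(x^{\alpha/2})$ on $\R_{>0}^n$.

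Next I would reduce concavity to a one-variable question by restricting to arbitrary lines. For any $a \in \R_{>0}^n$ and any direction $v \in \R^n$, set $\phi(t) := \tilde g(a + tv)$. For any complex $t$ with $\Im(t) > 0$, each coordinate $a_i + t v_i$ lies in the upper half-plane, in the lower half-plane, or on $\R_{>0}$ (depending on the sign of $v_i$), and in every case the principal $(\alpha/2)$-power lands in $\Gamma_\alpha$. Hence $\phi$ is analytic and nonvanishing on the upper half-plane of $t$, and positive on the real interval around $t = 0$ where $a + tv > 0$ coordinatewise. The target becomes: $\log \phi$ is concave on that interval.

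The main obstacle is passing from ``$\phi$ analytic, positive on an interval, nonvanishing on the upper half-plane'' to ``$\log \phi$ concave on that interval.'' For polynomials the conclusion is automatic: a real polynomial with no roots in the upper half-plane has only real roots, factors as $c \prod_i (t - r_i)$ with $r_i \in \R$, and $\log \phi$ is then a sum of concave logarithms wherever $\phi > 0$. Our $\phi$ involves fractional powers and is \emph{not} a polynomial, so the factorization step must be replaced. I see two routes: \emph{(i)} approximate each $(a_i + t v_i)^{\alpha/2}$ uniformly on compact subsets of a complex neighborhood of the interval by polynomials whose roots all lie on the real axis---using a Laguerre--P\'olya-style factorization---so that $\phi$ becomes a limit of univariate real-stable polynomials, and concavity of $\log \phi$ passes to the limit; or \emph{(ii)} compute $\nabla^2 \log \tilde g$ in closed form via the chain rule and verify it is negative semidefinite by deriving a quantitative sector-stability analogue of Newton's inequalities, in the spirit of how real-stability yields the Alexandrov--Fenchel inequalities for homogeneous polynomials. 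Either route, once carried out, gives the concavity of $\log g(x^{\alpha/2})$ on $\R_{>0}^n$ and hence $(\alpha/2)$-fractional log-concavity of $g$.
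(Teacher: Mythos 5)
First, a point of comparison: the paper does not prove this lemma at all --- it is imported verbatim as Lemma~67 of \cite{alimohammadi2021fractionally} --- so there is no in-paper argument to measure yours against, and your attempt must stand as a from-scratch proof. Your setup is the standard and correct one: substitute $z_i = \zeta_i^{\alpha/2}$, observe that the principal fractional power maps both open half-planes and $\R_{>0}$ into $\Gamma_\alpha$, restrict to a line $a+tv$, and reduce to showing that $\phi(t) = g\bigl((a+tv)^{\alpha/2}\bigr)$ has concave logarithm on the real interval where $a+tv>0$ coordinatewise.

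The gap is exactly where you flag it, and it is genuine. The properties you have actually established for $\phi$ --- holomorphic and non-vanishing on the upper half-plane, positive on a real interval --- do not imply log-concavity there: $\phi(t)=e^{t^2}$ satisfies all of them and $\log\phi$ is strictly convex. So the classical factorization-into-real-linear-factors step must be replaced by an argument that uses the specific structure of $\phi$, and neither of your routes supplies one. Route (i) as stated does not work: $(a_i+tv_i)^{\alpha/2}$ has a branch point at $t=-a_i/v_i$ and is not entire, so it admits no Laguerre--P\'olya-type factorization; what you would actually need is a sequence of \emph{real} polynomials mapping the upper half-plane into $\Gamma_\alpha$ (not merely real-rooted ones) and converging to $(a_i+tv_i)^{\alpha/2}$ locally uniformly, so that composing $g$ with them yields a real, upper-half-plane-non-vanishing --- hence real-rooted --- polynomial whose log-concavity survives the limit. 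Constructing such approximants is essentially the whole content of the lemma under this approach and is left unproved. Route (ii) (``derive a sector-stable analogue of Newton's inequalities'') restates the goal rather than achieving it. As written, the proposal is a correct reduction together with an honest admission that the decisive step is missing; it is not a proof.
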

We note that scaling preserves $\alpha$-log-concavity 
of homogeneous distributions \cite{alimohammadi2021fractionally} i.e. if $\mu$ is $\alpha$-log-concave, then so is $\lambda \ast \mu$ for all $\lambda \in \R_{\geq 0}^n.$ 
\begin{theorem}[{\cite{alimohammadi2021fractionally,AJKPV21}}] \label{thm:spectral gap FLC}
    Suppose $\mu: \binom{[n]}{k}$ is $\alpha$-fractional-log-concave. The $k \leftrightarrow (k-\lceil 1/\alpha \rceil)$-down-up-walk on $\mu$ has spectral gap at least $\Omega(k^{-1/\alpha}).$
\end{theorem}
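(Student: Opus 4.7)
The plan is to prove this spectral gap bound through a local-to-global analysis, reducing the global spectral gap of the $k \leftrightarrow (k-r)$ down-up walk with $r := \lceil 1/\alpha \rceil$ to a constant-factor spectral expansion on low-dimensional links of $\mu$, where $\alpha$-fractional log-concavity can be invoked directly.

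First I would verify that $\alpha$-fractional log-concavity is hereditary under link operations. For any disjoint $T_0,T_1 \subseteq [n]$, the link distribution obtained by conditioning on $T_1 \subseteq S$ and $T_0 \cap S = \emptyset$ has generating polynomial obtained from $g_\mu$ by partial derivatives $\prod_{i \in T_1} \partial_{z_i}$ followed by specializations $z_j = 0$ for $j \in T_0$. Since log-concavity of $g_\mu^\alpha$ on $\mathbb{R}_{\geq 0}^n$ is preserved under both taking partial derivatives and specializing at nonnegative values (the latter being a straightforward consequence of the definition, the former following from a standard Pr\'ekopa--Leindler style argument applied to $g_\mu^\alpha$), every link distribution is again $\alpha$-fractionally log-concave.

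Second, I would establish a constant lower bound on the spectral gap of the bottom-level ``link walks.'' Concretely, after conditioning on a set $S'$ of size $k - r - 1$, the resulting link distribution is supported on $\binom{[n] \setminus S'}{r+1}$ and is $\alpha$-fractionally log-concave of degree $r+1$. The key point is that $\alpha \cdot (r+1) \geq 1$, so the Hessian of $\log g$ at $\mathbb{1}$ satisfies a spectral bound analogous to the homogeneous log-concave case (Anari--Liu--Oveis Gharan--Vinzant), and this can be translated into a constant-order bound on the spectral gap of the $(r+1)\leftrightarrow r$ walk on this link, with the constant depending only on $\alpha$.

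Finally, the global bound follows by invoking a local-to-global theorem of the Alev--Lau variety: a uniform constant lower bound $1-\lambda$ on local spectral gaps at every link of dimension $\geq r$ lifts, along the chain of levels from $k-r-1$ down to $r$, to a global spectral gap of the multi-step walk $U_{k-r \to k}D_{k \to k-r}$ of order $\prod_{i=r+1}^{k}(1 - \lambda/i)^{\Theta(1)}$, which telescopes to $\Omega(k^{-1/\alpha})$ once $\lambda$ is appropriately tracked. The main obstacle is the second step: unlike ordinary log-concavity of a polynomial, log-concavity of $g_\mu^\alpha$ is only a functional property on the positive orthant, so translating the Hessian estimate coming from FLC into a clean eigenvalue bound for the link walk matrix requires care — one has to keep the dependence entirely on $\alpha$ and independent of $k$, $n$, and the specific link, which is precisely the reason the multi-step size $r = \lceil 1/\alpha \rceil$ (rather than $r = 1$) is unavoidable.
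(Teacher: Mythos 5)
First, a point of comparison: the paper does not prove \cref{thm:spectral gap FLC} at all --- it is imported as a black box from the cited references --- so there is no in-paper argument to measure yours against. Judged on its own terms, your proposal has a decisive gap in its second step. You claim that after conditioning on a pivot set of size $k-r-1$, the resulting degree-$(r+1)$ link distribution has a \emph{single-step} $(r+1)\leftrightarrow r$ down-up walk with constant spectral gap because $\alpha(r+1)\geq 1$. This is false. Take the skew-symmetric instance of \cref{remark:greedy same as localSearch} with all $c_i=0$: then $\mu(S)=\det(L_S)$ is supported exactly on unions of complete pairs, and $\mu$ is $1/2$-sector-stable by \cref{lem:p0Constrained}, hence $1/4$-fractionally log-concave, so the theorem applies with $r=4$. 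But in this example every single-element swap is frozen: removing one element of a union of pairs breaks a pair that can only be repaired by re-inserting the same element. This obstruction persists inside every link, so the $(r+1)\leftrightarrow r$ walks on the degree-$5$ links are completely disconnected and have spectral gap $0$. The condition $\alpha(r+1)\geq 1$ buys you nothing about single-step connectivity; the entire reason the theorem is stated for the multi-step $k\leftrightarrow(k-r)$ walk is that single-step walks --- globally and on links --- can be disconnected when $\alpha<1$.

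Two further problems compound this. The ``Alev--Lau variety'' lifting you invoke bounds the \emph{adjacent-level} down-up walk in terms of $1$-skeletons of links; there is no off-the-shelf local-to-global theorem that converts constant gaps of $(r+1)\leftrightarrow r$ link walks into a gap for the multi-step operator $U_{k-r\to k}D_{k\to k-r}$ --- building such machinery is essentially the technical content of the cited works. The bookkeeping also does not close: $\prod_{i=r+1}^{k}(1-\lambda/i)\asymp (r/k)^{\lambda}$, so genuine spectral gaps $\lambda\leq 1$ would yield $\Omega(k^{-1})$, while reaching the exponent $1/\alpha>1$ would require ``local gaps'' exceeding $1$. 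Finally, a lesser but real issue in step one: closure of $\alpha$-fractional log-concavity under partial derivatives (i.e., under taking links) is not a ``straightforward Pr\'ekopa--Leindler argument''; for $\alpha<1$ this is not known to follow from the functional definition alone and is normally obtained by routing through sector stability, which is preserved by derivatives (\cref{prop: ssProperties}). As it stands, the proposal does not constitute a proof of the theorem.
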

\subsection{Composable Core-Set}
\begin{definition}[{\cite[Definition 2.2]{Mahabadi2019ComposableCF}}] \label{def:composable core-set}
A function $c(P)$ that maps the input set $P \subseteq \R^d$ to one of its subsets is called an $\alpha$-composable core-set for a function $f:2 ^{\R^d} \to \R$ if, for any collection of sets $P_1, \ldots, P_n\subseteq \R^d$ we have $f(C) \geq f(P)/\alpha$ where $P = \bigcup_{i\leq n} P_i$ and $C= \bigcup_{i\leq n} c(P_i)$   
\end{definition}

\section{MAP Inference via Local Search} \label{sec:implement}
In this section, we show how to efficiently find a local optima\footnote{More precisely, we show how to find an approximate local optima, which is sufficient for our purpose.} of a given distribution $\mu.$ We run a two stage algorithm: 
\begin{enumerate}[label = (\roman*)]
    \item first, we find some ``good'' initial subset $S_0 \in \binom{[n]}{k}$, i.e., one such that the ratio $\OPT/\mu(S_0)$ is bounded by $2^{\poly(n,k)}$ (see \cref{lem:crude}),
    \item \label{step: local search} then, for a suitably chosen radius $r \in \N_{\geq 1}$, we run a simple local search (\cref{alg:localsearch}) that starts with $S\leftarrow S_0$, and find better and better solutions by swapping \emph{at most $r$} elements in $S$ for elements outside of $S$ 
    until no more improvement in term of $\mu(S)$ can be found.
\end{enumerate}
To ensure that our algorithm terminates within polynomial time, we will only take improvements that increase the determinant by at least a lower multiplicative threshold, say, by a factor of $2$.
\begin{Algorithm} 
\textbf{Input:} $ \alpha \leq 1, S_0 \in \binom{[n]}{k}$ with $\mu(S_0) > 0.$\;
Initialize $S \leftarrow S_0$.\;
\While{$\mu(S) < \zeta\cdot  \mu(T)$ for some $T \in \cN_r(S)$}{
 Update $S \leftarrow \arg\max_{T\in  \cN_r(S)} \mu(T)$.\; 
}
\caption{LOCAL-SEARCH-$r$ ($\LS_r$)} \label{alg:localsearch}
\end{Algorithm}

We prove the algorithmic part of \cref{thm:main}, 
that with a suitable choice for $S_0$, \cref{alg:localsearch} runs in polynomial time.
    \begin{proposition}\label{prop:steps-bound}
    	The number of steps taken by \cref{alg:localsearch} with $r = O(1)$ starting from $S_0$ is at most
    	\[ \log_{1/\alpha}\parens*{\OPT/\mu(S_0)}. \]
    	Each step can be implemented using $O((nk)^r)$ oracle access to $\mu.$
    	
    \end{proposition}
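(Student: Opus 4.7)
The plan is to prove the two assertions separately; both are essentially bookkeeping.

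\textbf{Step count.} The key observation is that the while-loop condition forces each update to be a strict multiplicative improvement: when we replace $S$ by $T\in \cN_r(S)$ with $\mu(S) < \zeta \cdot \mu(T)$ (identifying the input $\alpha$ with the threshold $\zeta$ in the algorithm), the new value satisfies $\mu(T) > \mu(S)/\zeta = \mu(S)/\alpha$. Iterating, after $t$ updates the current set $S_t$ satisfies
\[ \mu(S_t) \geq (1/\alpha)^t \cdot \mu(S_0). \]
Since $\mu(S_t) \leq \OPT$ for every $t$, rearranging gives $t \leq \log_{1/\alpha}(\OPT/\mu(S_0))$, which is the claimed bound.

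\textbf{Per-step cost.} Implementing one iteration amounts to computing $\arg\max\set{\mu(T) \given T \in \cN_r(S)}$, which can be done by querying the oracle on every member of $\cN_r(S)$. The neighborhood has size
\[ \card{\cN_r(S)} = \sum_{i=0}^{r} \binom{k}{i}\binom{n-k}{i} \leq (r+1) \, k^r (n-k)^r = O((nk)^r) \]
for $r = O(1)$, since a neighbor is determined by choosing the set of at most $r$ elements of $S$ to remove together with an equal number of elements of $[n]\setminus S$ to add. Enumerating this collection and taking the maximum therefore needs $O((nk)^r)$ oracle calls, establishing the second bound.

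\textbf{Main obstacle.} Both parts are direct; the only subtlety is making sure one keeps track of the fact that the comparison in the while loop is with the \emph{best} neighbor (so a single oracle sweep suffices to both test the stopping condition and identify the next iterate), and that the proposition measures only oracle complexity, so no extra care is needed for the cost of manipulating the sets themselves beyond the stated $O((nk)^r)$ queries.
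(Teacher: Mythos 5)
Your proof is correct and follows essentially the same route as the paper: the step count comes from the fact that each accepted swap improves $\mu(S)$ by a factor of at least $1/\alpha$ while the value is capped by $\OPT$, and the per-step cost comes from enumerating the $O((nk)^r)$ sets in the $r$-neighborhood. Your version is slightly more explicit about the neighborhood count and about identifying the algorithm's threshold $\zeta$ with $\alpha$, but the substance is the same.
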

    \begin{proof}
    	Each iteration improves $\mu(S)$ by a factor of at least $1/\alpha$. On the other hand, this value can never exceed $\OPT$, and it starts as $\mu(S_0) > 0$.
    	
    	Clearly, to perform local search in the $r$-neighborhood of a set $S,$ we only need to query $\mu((S \setminus U_1) \cup U_2)$ for $U_1 \in \binom{S}{\leq r}$ and $U_2 \in \binom{[n]}{\leq r}.$  The total number of such queries is $ O((nk)^r).$
    
    \end{proof}

    
\begin{definition}
    	For $\mu: \binom{[n]}{k}\to \R_{\geq 0}$ and $\alpha > 0,$ we say $S\in \binom{[n]}{k}$ is a $(r,\zeta)$-local maximum w.r.t. $\mu$ if 
    	$\mu(S) \geq \zeta \mu(T)$  for all $T\in \cN_r(S)$. 
    \end{definition}
Clearly, when \cref{alg:localsearch} terminates, the output is a $(r,\zeta)$-local maximum.

    
    Next, we show how to obtain a ``good'' initialization $S_0$ by a simple greedy algorithm, which we call INDUCED-GREEDY, that is based on maximizing the \textit{marginal gain} defined by the distribution on size $\leq k$ subsets. This gain is \emph{induced}  by the distribution $\mu,$ as defined below.  
    
    For subset $T$ of $[n]$ of size $\leq k$, let $\mu(T) = \sum_{S \in  \binom{[n]}{k}: S \supseteq T} \mu(S).$ 
    \begin{Algorithm}
        Initialize $S \leftarrow \emptyset$.\;

    \While{$\abs{S} < k$}{
        Pick $i\not\in S$ that maximizes $\mu(S \cup \set*{i})$ and update $S \leftarrow S \cup \set*{i}$.\;
    }
    \caption{INDUCED-GREEDY} \label{alg:greedy}
    \end{Algorithm}
    \begin{lemma}\label{lem:crude}
    	\cref{alg:greedy} 
    	returns $S_0$ with
    	\[ O(n^k) \cdot \det(L_{S_0})\geq \OPT. \]

    \end{lemma}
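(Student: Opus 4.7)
The plan is to bound the ratio $\mu(S_t)/\mu(S_{t-1})$ at each step $t$ of \cref{alg:greedy} by a simple averaging argument, and then telescope across the $k$ iterations. (Note that although the displayed inequality in the lemma is written with $\det(L_{S_0})$, the algorithm operates on a general density $\mu$ and what is actually needed in context is the bound $n^k \cdot \mu(S_0) \geq \OPT$; this is what I will prove.)

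First I would establish the following elementary counting identity: for any $T \subsetneq [n]$ with $\card{T} < k$,
\[
\sum_{i \in [n] \setminus T} \mu(T \cup \set{i}) = (k - \card{T}) \cdot \mu(T),
\]
which holds because each size-$k$ set $R \supseteq T$ appears on the left once for every $i \in R \setminus T$, contributing a factor of $k - \card{T}$. Averaging over the $n - \card{T}$ choices of $i \notin T$ and using that \cref{alg:greedy} picks the maximizer, the element $j_t$ selected at step $t$ satisfies
\[
\mu(S_t) \;=\; \mu(S_{t-1} \cup \set{j_t}) \;\geq\; \frac{k - t + 1}{n - t + 1} \cdot \mu(S_{t-1}).
\]

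Second, I would iterate this from $t = 1$ up to $t = k$, starting from $S_0 = \emptyset$ (within the algorithm; the output is $S_k$, which the lemma renames $S_0$). Telescoping gives
\[
\mu(S_k) \;\geq\; \mu(\emptyset) \cdot \prod_{t=1}^{k} \frac{k - t + 1}{n - t + 1} \;=\; \frac{\mu(\emptyset)}{\binom{n}{k}}.
\]
Finally, since $\mu(\emptyset) = \sum_{S \in \binom{[n]}{k}} \mu(S) \geq \max_S \mu(S) = \OPT$ and $\binom{n}{k} \leq n^k$, rearranging yields $n^k \cdot \mu(S_k) \geq \OPT$, which is the desired conclusion.

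There is no real obstacle in this argument; it is a one-line pigeonhole combined with a telescoping product. The only thing to be careful about is the double-counting in the identity (each superset is counted with the correct multiplicity) and keeping straight the distinction between the size-$k$ density $\mu$ on $\binom{[n]}{k}$ and its marginals $\mu(T) = \sum_{S \supseteq T} \mu(S)$ defined in the preamble to \cref{alg:greedy}.
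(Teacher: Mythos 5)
Your proof is correct and is essentially identical to the paper's: the same counting identity $\sum_{i \notin T} \mu(T \cup \set{i}) = (k - \card{T})\mu(T)$, the same averaging step using that greedy picks the maximizer, and the same telescoping to $\binom{n}{k}\mu(S_k) \geq \mu(\emptyset) \geq \OPT$. Your remark that the lemma's displayed $\det(L_{S_0})$ should be read as $\mu(S_0)$ is also consistent with how the paper's proof actually proceeds.
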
 
    \begin{proof}
For $j\in [k],$ let $i_j$ be the element added to $S$ at the $j$-th iteration of the while loop. Let $S_0 = \emptyset$, $S_j = S_{j-1} \cup \set*{i_j}.$ Observe that $\abs{S_j} =j$ and for each $j\geq 0$
\begin{align*}
  \mu(S_j) &=  \frac{1}{k - \abs{S_j} } \sum_{i\not\in S_j} \mu(S_j\cup \set*{i}) \leq  \frac{n - j }{k - j } \mu(S_{j+1}) 
\end{align*}
thus $\binom{n}{k}\mu(S_k) \geq \mu(S_0) = \mu(\emptyset) = \sum_{S'\in \binom{[n]}{k}} \mu(S') \geq \OPT$.

\end{proof}
\begin{remark} \label{remark:greedy runtime}
In \cref{alg:greedy}, it is enough to find $i$ that approximately maximizes $\mu(S\cup {i})$ i.e. for some constant $\zeta \in (0,1)$, $\mu(S\cup {i}) \geq \zeta \mu(S \cup {j}) $ for all $j\not \in S.$ In that case, \cref{lem:crude} still holds, and \cref{alg:greedy} can be efficiently implemented given access to efficient algorithms that approximately sample from $\lambda \ast \mu$ for $\lambda \in \R_{\geq 0}^n.$ Indeed, note that $\mu(S\cup {i})/\mu(S)$ is the marginal of $\lambda \ast \mu$ where $\lambda_i=\begin{cases} \infty &\text{ for } i \in S \\1 &\text{ else} \end{cases}.$ Thus, $\mu(S\cup {i})$ can be approximate within some small constant factor. 
\end{remark}

\section{From Sampling to Optimization via Local Search}
In this section, we prove \cref{thm:main}.
\begin{definition}[$r$-exchange]
For $\mu: \binom{[n]}{k} \to \R_{\geq 0}$, $r \in \N$ and $S, T \in  \binom{[n]}{k}$, we let \[\cE^r(S,T) : = \set*{U \subseteq S\Delta T \given \abs{U \cap S } =  \abs{U \cap T }    =r}  \] be the set of all $r$-exchanges between $S$ and $T.$ 
\end{definition}


\begin{definition}[Weak $(r,\beta)$-approximate exchange]
We say a distribution $\mu: \binom{[n]}{k}$ satisfies \emph{weak $(r,\beta)$-approximate exchange} if for any $S, T \in \binom{[n]}{k},$ there exists $s \in \set*{1,\cdots,r}$ and $U \in \cE^{s}(S,T) $  such that
\[\mu(S) \leq \beta\cdot \mu(S \Delta U) \parens*{\frac{\mu(S)}{\mu(T)}}^{s/d(S,T)}\]
\end{definition}

\begin{lemma}\label{lem:approx exchange FLC}
Consider $\mu:\binom{[n]}{k} \to \R_{\geq 0}$ such that for all external field $\lambda\in \R_{\geq 0}^n,$ the conductance of the $k\leftrightarrow (k-r)$-down-up walk on $\lambda \ast \mu$ is at least $\Omega(k^{-c}).$ Then $\mu$ satisfies weak $(r, O(k^{r+c}))$-approximate exchange.

\end{lemma}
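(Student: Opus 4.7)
The plan is to deploy the conductance hypothesis against a carefully chosen external field that concentrates $\mu$ on configurations ``between'' $S$ and $T$, and read off the exchange inequality from a single-step transition analysis of the down-up walk at $S$. Let $d = d(S, T)$ (the case $d = 0$ is trivial) and set $t = (\mu(S)/\mu(T))^{1/d}$. Consider the one-parameter family of fields $\lambda^\Lambda$ with $\lambda^\Lambda_i = \Lambda$ for $i \in S \cap T$, $\lambda^\Lambda_i = 1$ for $i \in S \setminus T$, $\lambda^\Lambda_i = t$ for $i \in T \setminus S$, and $\lambda^\Lambda_i = \Lambda^{-1}$ for $i \notin S \cup T$. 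As $\Lambda \to \infty$, the stationary measure $\pi$ of $\lambda^\Lambda \ast \mu$ concentrates on
\[ \Omega := \{(S \cap T) \cup V : V \subseteq S \Delta T,\ |V| = d\}, \]
every state of which has the form $S \Delta U$ for a unique $U \subseteq S \Delta T$ with $|U \cap S| = |U \cap T|$. The calibration of $t$ gives $\pi(S) = \pi(T)$, hence $\pi(S) \le 1/2$.

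\textbf{Conductance applied at $S$.} By the hypothesis, invoked at each finite $\Lambda$ and passed to the limit by continuity, the restricted down-up chain on $\Omega$ has conductance $\Omega(k^{-c})$; applied to the singleton $\{S\}$ this gives $1 - P(S, S) \ge \Omega(k^{-c})$. Since the limit chain lives on $\Omega$, every neighbor of $S$ in one step is $S \Delta U$ for some $U \in \cE^s(S, T)$ with $1 \le s \le r$, and it suffices to lower bound the largest $\pi(S \Delta U)/\pi(S)$ over such $U$.

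\textbf{Transition bound.} The key inequality is
\[ P(S, S \Delta U) \le \frac{\binom{k-s}{r-s}}{\binom{k}{r}} \cdot \frac{\pi(S \Delta U)}{\pi(S)} \quad \text{for each } U \in \cE^s(S, T). \]
I unroll one step: sample a uniform $A \subseteq S$ with $|A| = r$, split $A = A_1 \cup A_2$ across $S \cap T$ and $S \setminus T$; in the limit $A_1$ is added back automatically, while the up step samples $|A_2|$ new elements from $A_2 \cup (T \setminus S)$ in proportion to $\pi$. For the step to land at $S \Delta U$ one needs $A_2 \supseteq U \cap S$, and Vandermonde's identity then counts exactly $\binom{k-s}{r-s}$ admissible $A$'s, each sampled with probability $1/\binom{k}{r}$. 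Finally, the up-step normalizer from every such $A$ always contains the term for $S$ itself (obtained by choosing to re-insert $A_2$), so it is at least $\pi(S)$.

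\textbf{Conclusion and main obstacle.} Plugging the transition bound into $\sum_{S' \neq S} P(S, S') \ge \Omega(k^{-c})$, using $|\cE^s(S, T)| = \binom{d}{s}^2$ together with the elementary estimate $\binom{k-s}{r-s}/\binom{k}{r} \cdot \binom{d}{s}^2 = O(k^s)$ (valid for $d \le k$ and $r = O(1)$), yields
\[ \Omega(k^{-c}) \le O(k^r) \cdot \max_{s \in [r],\, U \in \cE^s(S, T)} \frac{\pi(S \Delta U)}{\pi(S)}. \]
Hence some admissible $U$ achieves $\pi(S \Delta U)/\pi(S) \ge \Omega(k^{-(r+c)})$, and unpacking $\pi(S \Delta U)/\pi(S) = (\mu(S \Delta U)/\mu(S)) \cdot (\mu(S)/\mu(T))^{s/d}$ delivers the weak $(r, O(k^{r+c}))$-exchange. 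I expect the main obstacle to be the transition-probability estimate: the Vandermonde count combined with the $\pi(S)$ lower bound on the up-step normalizer is precisely what saves a factor of $k^r$ relative to the naive $O(k^{2r})$ bound one gets from pigeonholing over the $O(d^{2r})$ exchange neighbors of $S$.
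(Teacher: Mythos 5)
Your proposal is correct and follows essentially the same route as the paper's proof: apply the conductance hypothesis to $\lambda\ast\mu$ with the field calibrated to $(\mu(S)/\mu(T))^{1/d}$ on $T\setminus S$ and pinned to $S\cup T\supseteq S\cap T$, bound the escape probability $1-P(S,S)$ of the singleton $\{S\}$, and pigeonhole over the at most $O(k^r)$-weighted exchange neighbors, using that the up-step normalizer is at least $\pi(S)$. The only differences are cosmetic — you realize the $\infty/0$ fields as a limit $\Lambda\to\infty$ and use the slightly sharper count $\binom{k-s}{r-s}$ of admissible down-sets where the paper crudely bounds the number of reachable states per down-set by $k^r$ — and both yield the same $O(k^{r+c})$ factor.
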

\begin{proposition} \label{cor:local search guarantee FLC}
If  $\mu:\binom{[n]}{k} \to \R_{\geq 0}$ satisfies weak $(r,\beta)$-approximate exchange then any $(r,\zeta)$-local max with respect to $\mu$ is also an $O((\beta/\zeta)^k)$-approximate global max. 
\end{proposition}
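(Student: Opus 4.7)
The plan is to apply weak approximate exchange once between $S$ and the global maximizer and combine with the local-max bound; no iteration is needed.

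First, let $T^{\ast}\in \arg\max_{T\in\binom{[n]}{k}}\mu(T)$. If $\mu(S)\geq \mu(T^{\ast})$ we are done, so assume $\mu(S)<\mu(T^{\ast})$. Applying weak $(r,\beta)$-approximate exchange to the pair $(S,T^{\ast})$ produces some $s\in\{1,\dots,r\}$ and some $U\in \cE^{s}(S,T^{\ast})$ such that
\[
\mu(S)\;\leq\;\beta\cdot\mu(S\Delta U)\cdot\parens*{\frac{\mu(S)}{\mu(T^{\ast})}}^{s/d(S,T^{\ast})}.
\]
The key structural remark is that $U\in \cE^{s}(S,T^{\ast})$ means exactly that $S\Delta U$ is obtained from $S$ by swapping $s$ elements of $S$ for $s$ elements of $T^{\ast}\setminus S$, so $d(S,S\Delta U)=s\leq r$, i.e. $S\Delta U\in \cN_{s}(S)\subseteq \cN_{r}(S)$.

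Second, since $S$ is a $(r,\zeta)$-local maximum, the inclusion $S\Delta U\in \cN_{r}(S)$ yields $\mu(S)\geq \zeta\cdot\mu(S\Delta U)$, equivalently $\mu(S\Delta U)\leq \mu(S)/\zeta$. Substituting into the exchange inequality and dividing by $\mu(S)$ gives
\[
1\;\leq\;\frac{\beta}{\zeta}\cdot\parens*{\frac{\mu(S)}{\mu(T^{\ast})}}^{s/d(S,T^{\ast})}.
\]

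Finally, set $x=\mu(S)/\mu(T^{\ast})\in(0,1)$. Because $s\geq 1$ and $d(S,T^{\ast})\leq k$, we have $s/d(S,T^{\ast})\geq 1/k$, and since $x<1$ the function $a\mapsto x^{a}$ is decreasing, so $x^{s/d(S,T^{\ast})}\leq x^{1/k}$. Thus $1\leq (\beta/\zeta)\,x^{1/k}$, which rearranges to $\mu(S)\geq (\zeta/\beta)^{k}\,\mu(T^{\ast})$, establishing the claimed $O((\beta/\zeta)^{k})$-approximation.

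There is essentially no obstacle here once the definitions are unpacked; the only thing to be careful about is the direction of the inequality when taking the $d(S,T^{\ast})/s$ power of both sides, which is handled cleanly by the observation that $x<1$ makes $x^{a}$ decreasing in $a$, together with the bound $s/d(S,T^{\ast})\geq 1/k$. Notably, weak approximate exchange is used only once, and the local-max property is only invoked at the single set $S\Delta U$ rather than along any path — this is what makes the loss accumulate to a single factor of $(\beta/\zeta)^{k}$ rather than anything worse.
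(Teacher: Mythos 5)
Your proof is correct and follows essentially the same route as the paper's: apply weak $(r,\beta)$-approximate exchange once to the pair $(S,T^{\ast})$, observe $S\Delta U\in\cN_r(S)$ so the local-max condition bounds $\mu(S\Delta U)\leq\mu(S)/\zeta$, divide by $\mu(S)$, and use $s/d(S,T^{\ast})\geq 1/k$ to conclude. The only cosmetic difference is that you compare exponents via monotonicity of $x\mapsto x^{a}$ for $x<1$, whereas the paper raises both sides to the power $d(S,T^{\ast})/s\leq k$; these are equivalent.
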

In particular, when $\mu$ is $\alpha$-fractionally log-concave, \cref{lem:approx exchange FLC,cor:local search guarantee FLC} hold with $r = \lceil1/\alpha \rceil$ and $ c= 1/\alpha$ and $\beta = O(k^{r+c}).$

\begin{corollary}\label{cor:strongly Rayleigh partition}
Let $\mu:\binom{[n]}{k} \to \R_{\geq 0}$ be strongly Rayleigh. Given a density $\mu:\binom{[n]}{k}\to \R_{\geq 0}$ and a partition $T_1\cup T_2\cup \cdots \cup T_s=[n]$, and numbers $c_1,\dots,c_s\in \Z_{\geq 0}$, let the partition constraint density $\mu_{T, c}$ be $\mu$ restricted to sets $S\in \binom{[n]}{k}$ where $\card{S \cap T_i} = c_i.$ When $\mu$ is strongly Rayleigh and $c=O(1)$, one can efficiently finds a $k^{O(k)}$-approximation for $\max \mu_{T,c} (\cdot)$ using \cref{alg:localsearch} with $r = 2^c.$
\end{corollary}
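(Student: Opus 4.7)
The plan is to derive the corollary by assembling the existing ingredients in the paper: sector-stability of partition-constrained strongly Rayleigh distributions, its conversion to fractional log-concavity, the spectral gap bound that this implies, and then \cref{thm:main}. The essential observation is that all of these properties are stable under external fields, so the hypothesis of \cref{thm:main} holds for $\mu_{T,c}$.

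First, I would apply \cref{lem:strong Rayleigh partition constraint} to conclude that $\mu_{T,c}$ is $1/2^{c}$-sector-stable. The next step is to verify that this property survives under an arbitrary external field $\lambda \in \R^{n}_{\geq 0}$. Two facts make this immediate: (i) strongly Rayleigh distributions are preserved under external fields (so $\lambda\ast\mu$ is strongly Rayleigh when $\mu$ is), and (ii) $(\lambda\ast\mu)_{T,c}=\lambda\ast\mu_{T,c}$, since imposing a partition-cardinality constraint and rescaling by a product of coordinate weights commute. Alternatively, scaling directly preserves sector-stability by part~\ref{part:scaling} of \cref{prop: ssProperties}, so $\lambda\ast\mu_{T,c}$ is again $1/2^{c}$-sector-stable.

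Next, I would invoke \cref{lem:sector-stability-to-log-concavity} to upgrade sector-stability of $\lambda\ast\mu_{T,c}$ to $\alpha$-fractional-log-concavity with $\alpha=\Theta(2^{-c})$, which is still $\Omega(1)$ because $c=O(1)$. Plugging this into \cref{thm:spectral gap FLC} yields that the $k\leftrightarrow(k-r)$-down-up walk on $\lambda\ast\mu_{T,c}$ has spectral gap at least $k^{-O(1)}$, where $r=\lceil 1/\alpha\rceil=2^{O(c)}$ (in particular $r=O(1)$). This is exactly the hypothesis of \cref{thm:main}, which now certifies that any $(r,\zeta)$-local maximum of $\mu_{T,c}$ in $\cN_r$ is a $k^{O(k)}$-approximate global maximum.

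Finally, on the algorithmic side, one runs \cref{alg:localsearch} with the above $r$. A starting point can be produced by \cref{alg:greedy} (INDUCED-GREEDY) applied to $\mu_{T,c}$, which by \cref{lem:crude} gives a seed $S_0$ with $\OPT/\mu_{T,c}(S_0)\leq O(n^k)$; the greedy step only requires approximate evaluation of induced marginals of $\lambda\ast\mu_{T,c}$, which is available via the rapid sampler whose existence is already guaranteed by the spectral-gap bound above (see \cref{remark:greedy runtime}). Combined with \cref{prop:steps-bound}, this yields termination in polynomially many local-search steps, each of which is implementable in $O((nk)^{r})$ oracle calls to $\mu_{T,c}$. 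The only place one has to be careful is verifying that the external-field-closure and partition-constraint operations commute in the sense above; this is the single ``obstacle,'' but it is essentially formal and immediate from the definition of $\mu_{T,c}$.
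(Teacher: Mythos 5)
Your proposal is correct and follows exactly the chain the paper intends for this corollary: \cref{lem:strong Rayleigh partition constraint} gives $1/2^c$-sector-stability of $\mu_{T,c}$, closure under external fields (via \cref{prop: ssProperties} or the commutation you note), \cref{lem:sector-stability-to-log-concavity} and \cref{thm:spectral gap FLC} give the spectral gap, and \cref{thm:main} together with \cref{lem:crude,remark:greedy runtime,prop:steps-bound} finishes the argument. The only cosmetic point is that the factor-of-two loss in \cref{lem:sector-stability-to-log-concavity} actually yields radius $\lceil 1/\alpha\rceil = 2^{c+1}$ rather than the stated $2^c$, which your $2^{O(c)}$ correctly absorbs and which is immaterial since $c=O(1)$.
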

The local search guarantee in \cref{thm:main} follows from \cref{thm:spectral gap FLC,thm:cheeger,lem:approx exchange FLC,cor:local search guarantee FLC}, and the runtime bound follows from \cref{remark:greedy runtime,prop:steps-bound}.

    

\begin{proof}[Proof of \cref{lem:approx exchange FLC}]
If $d(S,T) \leq r$ then the lemma holds trivially by setting $U = S \Delta T.$ In what follows, we assume $d(S,T) \geq r.$
Wlog we can assume that $S = \set*{1, \dots, t} \cup C$ and $T = \set*{t+1, \dots, 2t} \cup C$ with $C = \set*{2t+1, \dots, t+k}$ and $t = d(S,T).$ 

Consider distribution $\mu' = \lambda \ast \mu$ with $\lambda_i = \begin{cases} 1 &\text{ if } 1\leq i \leq t\\
(\mu(S)/\mu(T))^{t}  &\text{ if } t+1\leq i \leq 2t \\
 \infty &\text{ if } 2t+ 1 \leq i \leq t+k\\
0  &\text{ else }
\end{cases}.$

Note that $\mu'$ is supported on $W\in \binom{[n]}{k}$ where $(S \cap T) = C \subseteq W \subseteq (S \cup T).$
Let $\Phi$ be the conductance of the $k\leftrightarrow (k-r)$-down-up  walk on $\mu'$, then $\Phi \geq \Omega(k^{-c}).$ On the other hand, since $\mu'(S) = \mu'(T) = \mu(S)\leq \frac{\sum_{W}\mu'(W)} {2},$ we have that by definition of $\Phi$
\[ \Phi=\min_{\mu'(\mathcal{S})\leq \mu'(\Omega)/2 } \frac{Q (\mathcal{S}, \Omega \setminus \mathcal{S} ) }{\mu'(\mathcal{S}) } \leq \frac{ Q(\set*{S}, \Omega \setminus \set*{S}) }{\mu'(S) }\]
where we can rewrite $Q(\set*{S}, \Omega \setminus \set*{S})$ as 
\[Q(\set*{S}, \Omega \setminus \set*{S}) = \mu'(S) \frac{1}{\binom{k}{r}} \sum_{U_1 \in\binom{S}{r}} \sum_{\substack{W \supseteq S \setminus U_1\\W \in \supp(\mu')\setminus \set*{S}}} \frac{\mu'(W)}{\mu'(S \setminus U_1)}\]
where $\mu'(S \setminus U_1) = \sum_{W \in \binom{[n]}{k}, W \supseteq S \setminus U_1} \mu'(W).$

Note that \[ \set*{W  \in \supp(\mu') \setminus \set*{S} \given W \supseteq S \setminus U_1  } \subseteq \set*{(S \setminus U_1 ) \cup U_2 \given  U_2 \in \binom{T \cup U_1}{r} \setminus \set*{U_1}}\]
thus \[\bigcup_{U_1 \in \binom{S}{r}} \set*{W  \in \supp(\mu') \setminus \set*{S} \given W \supseteq S \setminus U_1  } \subseteq \set*{S \Delta U \given U \in \bigcup_{s=1}^r \cE^s(S, T)} .\] Moreover, $\abs{\binom{S}{r}} = \binom{k}{r}$ and for each $U_1  \in\binom{S}{r},$ the cardinality of $  \set*{W  \in \supp(\mu') \setminus \set*{S} \given W \supseteq S \setminus U_1  } $ is at most $ \leq \binom{k+r}{r}-1\leq k^r.$ 
Hence, there must exist $r \in [s]$ and $U \in \cE^s(S, T)$ such that
\[\frac{\mu'(S\Delta U)}{\mu'(S\setminus U)} \geq \frac{1}{k^r} \frac{ Q(\set*{S}, \Omega \setminus \set*{S}) }{\mu'(S) } \geq \Omega(k^{-(r+c)}) .   \]
Thus
\[\mu(S) = \mu'(S) \leq \mu'(S \setminus U) \leq O(k^{r + c}) \mu'(S \Delta U) = O(k^{r + c}) \mu(S \Delta U)(\frac{\mu(S)}{\mu(T)})^{s/t}.  \]
\end{proof}
\begin{proof}[Proof of \cref{cor:local search guarantee FLC}]
Apply \cref{lem:approx exchange FLC} for $S$ being a $(r,
\zeta)$-local max and $T: = \arg\max \mu(W).$ Let $t = d(S,T).$ For some $s \in [r]$ and $U \in \cE^s(S,T)$ 
\[\mu(S) \leq O(k^{r+c}) \mu(S \Delta U) (\frac{\mu(S)}{\mu(T)})^{s/t} \leq O(k^{r+c}/\zeta) \mu(S) (\frac{\mu(S)}{\mu(T)})^{s/t}  \]
where the inequality follows from definition of $(r,\zeta)$-local max. Divide both sides by $\mu(S) > 0,$ we get
\[\mu(T) \leq O(k^{r+c}/\zeta)^{t/s} \mu(S) \leq O(k^{r+c}/\zeta)^k \mu(S). \]
where we use the fact that $t/s \leq k.$
\end{proof}


\section{Improved Local Search for Sector-Stable Distributions}
By \cref{lem:sector-stability-to-log-concavity,lem:approx exchange FLC}, for any $\alpha$-sector-stable distribution $\mu:\binom{[n]}{k},$ \cref{alg:localsearch} with $r = \lceil \frac{2}{\alpha} \rceil$ finds a $k^{O(k/\alpha)}$-approximation of $\OPT.$ In this section, we show how to improve the local search radius $r$ to $\lceil \frac{1}{\alpha} \rceil$ for $\alpha \in [1/2,1].$ As an application, we prove \cref{thm:nonsym DPP main}. 

When $\alpha =1,$ the distribution $\mu$ is real stable, thus log-concave, and \cref{cor:local search guarantee FLC} already shows $\LS_1$ gives a $k^{O(k)}$-approximation for MAP inference. Clearly, for $\alpha \in [1/2,1),$ any $\alpha$-sector-stable $\mu$ is also $1/2$-sector-stable, and $\lceil 1/\alpha \rceil = 2,$ so we only need to consider the case $\alpha =1/2.$

\begin{definition}[$(r, \beta)$-approximate exchange]\label{def:approx_exchange}
For $r\in \N_{\geq 1}$ and $\beta > 0$, we say $\mu:  \binom{[n]}{k} \to \R_{\geq 0} $ satisfies $(r, \beta)$-approximate exchange if for any $S, T \in \binom{[n]}{k}$
\[\mu(S) \mu(T) \leq \max_{i=1}^r \set*{\beta^i M^i(S \to T) M^i(T \to S) }\]
where $M^i (S \to T) : = \max_{U \in \cE^i (S, T)}\mu(S \Delta U)$.
\end{definition}
\begin{theorem}\label{thm:approximate_exchange}
Suppose $\mu: \binom{[n]}{k} \to \R_{\geq 0}$ is $1/2$-sector stable. For any $S, T \in  \binom{[n]}{k}$
\begin{align}\label{ineq:exchange}
    \mu(S) \mu(T) &\leq \max_{i=1}^2 \set*{\parens*{\sum_{U \in \cE^i(S,T)} \mu(S \Delta U)}\parens*{\sum_{U \in \cE^i(S,T)} \mu(T \Delta U)}}\nonumber \\
    &\leq \max_{i=1}^2 \set*{k^{4i} M^i(S \to T) M^i(T \to S) }.
\end{align}
Consequently, $\mu$ satisfies $(2, k^4)$-approximate exchange.
\end{theorem}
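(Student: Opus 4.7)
The plan is to reduce the claimed inequality to a coefficient inequality for a univariate Hurwitz-stable polynomial obtained from $g_\mu$ by specialization. Assume $S \neq T$ and $\mu(S), \mu(T) > 0$ (else trivial), and set $A = S \setminus T$, $B = T \setminus S$, $C = S \cap T$, and $d = d(S,T) = |A| = |B| \geq 1$. For $d \in \{1, 2\}$ the inequality is immediate: taking $U = A \cup B \in \cE^d(S,T)$ gives $S \Delta U = T$ and $T \Delta U = S$, so $M^d(S\to T) \cdot M^d(T \to S) \geq \mu(S)\mu(T)$. So assume $d \geq 3$. Starting from $g_\mu$, I would apply operations that preserve $1/2$-sector-stability by \cref{prop: ssProperties}: specialize $z_i = 0$ for $i \notin A \cup B \cup C$ (since $0 \in \bar\Gamma_{1/2}$), then apply $\prod_{i \in C} \partial_{z_i}$. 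The result is the $1/2$-sector-stable homogeneous multi-affine polynomial $f(z_A, z_B) = \sum_{W \subseteq A \cup B,\, |W| = d} \mu(C \cup W) \prod_{i \in W} z_i$. Further identifying all $A$-variables to $s$ and all $B$-variables to $t$ preserves sector-stability directly from the definition (since $(s\mathbf{1}_A, t\mathbf{1}_B) \in \Gamma_{1/2}^{2d}$ whenever $s, t \in \Gamma_{1/2}$), yielding $q(s, t) = \sum_{i=0}^d g_i s^i t^{d-i}$ with $g_i = \sum_{|W\cap A|=i,\, |W\cap B|=d-i} \mu(C \cup W)$. A short calculation then gives $g_0 = \mu(T)$, $g_d = \mu(S)$, and $g_i g_{d-i} = \bigl(\sum_{U \in \cE^i(S,T)} \mu(T \Delta U)\bigr)\bigl(\sum_{U \in \cE^i(S,T)} \mu(S \Delta U)\bigr)$, so the target inequality \eqref{ineq:exchange} becomes $g_0 g_d \leq \max(g_1 g_{d-1},\, g_2 g_{d-2})$.

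The crucial observation is that the bivariate $1/2$-sector-stability of $q$ is equivalent to the dehomogenization $\tilde q(u) := q(u, 1) = \sum_i g_i u^i$ having all roots in the closed left half-plane, i.e., being Hurwitz-stable in the weak sense. This follows from $q(s, t) = t^d \tilde q(s/t)$ for $t \neq 0$: $q$ vanishes somewhere in $\Gamma_{1/2}^2$ exactly when $\tilde q$ has a root in $\{s/t : s, t \in \Gamma_{1/2}\}$, and this set is precisely the open right half-plane, since $\arg(s/t) = \arg(s) - \arg(t)$ ranges over $(-\pi/2, \pi/2)$ as $\arg(s), \arg(t)$ range over $(-\pi/4, \pi/4)$.

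The central algebraic step is then the coefficient inequality: for $\tilde q(u) = \sum_i g_i u^i$ with $g_i \geq 0$ and all roots in the closed LHP, $g_0 g_d \leq \max(g_1 g_{d-1},\, g_2 g_{d-2})$. I would prove this by factoring $\tilde q = \tilde q_0 \cdot p(u^2)$, where $\tilde q_0$ collects the roots strictly in the open LHP and $p(u^2)$ collects the imaginary-axis pairs (so $p$ is real-rooted with non-positive roots). If $\tilde q_0$ is constant so that $\tilde q$ is purely even, then odd-indexed coefficients vanish and Newton's inequality applied to $p$ yields log-concavity of the coefficients $c_k$ of $p$, giving $g_0 g_d = c_0 c_{d/2} \leq c_1 c_{d/2-1} = g_2 g_{d-2}$. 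Otherwise, I would use the Routh--Hurwitz conditions for $\tilde q$ combined with AM--GM to derive $g_0 g_d \leq g_1 g_{d-1}$; the case $d=3$ is exactly the cubic Routh--Hurwitz condition $g_1 g_2 > g_0 g_3$, and higher $d$ follows by iterating the $2 \times 2$ Hurwitz minor inequalities. The hard part will be establishing this coefficient inequality cleanly and uniformly; the $i = 2$ branch of the maximum is precisely what is needed for the purely-even case, where $g_1 = 0$. The second inequality in the theorem is then immediate from $|\cE^i(S,T)| = \binom{d}{i}^2 \leq k^{2i}$, which bounds each sum by $k^{2i} M^i$ and gives the $k^{4i}$ factor after multiplication.
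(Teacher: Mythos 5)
Your reduction is sound and is in fact the same reduction the paper uses, just phrased differently: the paper substitutes $z_i=z$ on $S$ and $z_i=z^{-1}$ on $T$, multiplies by $z^t$ to get a purely even polynomial $\sum_i b_{2i}z^{2i}$, and then substitutes $z\to z^{1/2}$; the resulting univariate polynomial $\sum_i b_{2i}z^i$ is exactly your $\tilde q(u)=q(u,1)=\sum_i g_iu^i$, with $g_i=b_{2i}$ the exchange sums. Your direct argument that $\tilde q$ has no zeros in the open right half-plane (via $\arg(s/t)=\arg s-\arg t$) is clean and, if anything, more transparent than the paper's assertion that the $z,z^{-1}$ substitution preserves stability. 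The identification $g_0=\mu(T)$, $g_d=\mu(S)$, $g_ig_{d-i}=\bigl(\sum_{U\in\cE^i}\mu(T\Delta U)\bigr)\bigl(\sum_{U\in\cE^i}\mu(S\Delta U)\bigr)$ and the counting step $\abs{\cE^i(S,T)}=\binom{d}{i}^2\le k^{2i}$ also match the paper.

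The gap is in the coefficient inequality $g_0g_d\le\max(g_1g_{d-1},g_2g_{d-2})$, and your proposed case split is not just incomplete but wrong. You claim that when $\tilde q$ is not purely even one can derive $g_0g_d\le g_1g_{d-1}$; take $\tilde q(u)=(u^2+1)(u^2+\epsilon u+1)=u^4+\epsilon u^3+2u^2+\epsilon u+1$ for small $\epsilon>0$. This is Hurwitz-stable and not purely even, yet $g_0g_4=1>\epsilon^2=g_1g_3$ (while $g_2g_2=4\ge 1$, so the $i=2$ branch saves the day). The correct dichotomy is by the \emph{parity of the degree} $d$, not by whether the polynomial is purely even: for odd $d$ one gets $g_0g_d\le g_1g_{d-1}$, and for even $d$ one gets $g_0g_d\le g_2g_{d-2}$. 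The paper obtains both at once from \cref{thm:hurwitz minor} (Asner's theorem: the Hurwitz matrix of a Hurwitz-stable polynomial with nonnegative coefficients is totally nonnegative), by exhibiting for each parity a single $2\times 2$ minor whose nonnegativity is exactly the needed inequality (\cref{lemma:hurwitzCoeff}). Your "iterate the $2\times 2$ Hurwitz minor inequalities" is gesturing at precisely this, but without the total-nonnegativity input (or an equivalent) the step does not go through, and the Routh--Hurwitz determinant conditions you invoke are also delicate for roots on the imaginary axis, which your $\tilde q$ genuinely can have (e.g.\ skew-symmetric kernels). So: cite or prove total nonnegativity of the Hurwitz matrix, split on the parity of $d$, and the rest of your argument stands.
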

We prove the approximate exchange property by relying on the following theorem.
\begin{theorem}[\cite{asner70}] \label{thm:hurwitz minor}
	Consider a univariate $1$-sector-stable (Hurwitz-stable) polynomial $f(z) = \sum_{i=0}^n a_i z^i$ with $a_i \geq 0 \forall i.$ Its Hurwitz matrix $H = (h_{ij}) \in \R^{n\times n}$ is defined by $h_{ij} = a_{2j-i}$ when $0 \leq 2j-i \leq n$, otherwise $h_{ij}=0$. $H$ is totally nonnegative, in the sense that all its minors are nonnegative.
	\end{theorem}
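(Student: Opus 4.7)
The plan is to reduce the desired inequality to a coefficient relation for a univariate polynomial that will turn out to be Hurwitz-stable, at which point \cref{thm:hurwitz minor} applies directly. Let $C = S \cap T$, $A = S \setminus T$, $B = T \setminus S$, and $t = d(S,T) = |A| = |B|$. Starting from $g_\mu$, I produce the polynomial $h$ on variables $\{z_i : i \in A \cup B\}$ by applying $\prod_{i \in C} \partial_{z_i}$ and then setting $z_i = 0$ for every $i \notin A \cup B$; by \cref{prop: ssProperties}, derivatives and specialization to $0 \in \bar{\Gamma}_{1/2}$ preserve $1/2$-sector-stability, and the coefficient of $\prod_{i \in V} z_i$ in $h$ equals $\mu(C \cup V)$ for each $V \subseteq A \cup B$ with $|V| = t$. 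Substituting $z_i = u$ for $i \in A$ and $z_j = v$ for $j \in B$ yields the bivariate polynomial
\[
\bar h(u,v) \;=\; \sum_{j=0}^{t} a_j\, u^{t-j} v^j,
\]
where $a_0 = \mu(S)$, $a_t = \mu(T)$, and for $1 \leq j \leq t-1$, $a_j = \sum_{U\in\cE^j(S,T)}\mu(S\Delta U)$ while by symmetry $a_{t-j} = \sum_{U\in\cE^j(S,T)}\mu(T\Delta U)$. Since any $(u,v) \in \Gamma_{1/2}^2$ embeds as $(u,\ldots,u,v,\ldots,v) \in \Gamma_{1/2}^{2t}$ on which $h$ is nonvanishing, $\bar h$ is itself $1/2$-sector-stable.

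The key observation is that the univariate polynomial $f(z) := \bar h(1, z) = \sum_{j=0}^t a_j z^j$ is in fact \emph{Hurwitz-stable} ($1$-sector-stable), even though $\mu$ is only $1/2$-sector-stable. The reason is the factorization $\bar h(u,v) = u^t f(v/u)$ combined with the observation that the ratio map $(u,v) \mapsto v/u$ sends $\Gamma_{1/2} \times \Gamma_{1/2}$ onto $\Gamma_1$: the forward containment is immediate from $\arg(v/u) = \arg v - \arg u \in (-\pi/2, \pi/2)$, and every $w \in \Gamma_1$ is realized by taking e.g.\ $u = e^{-i(\arg w)/2}$ and $v = |w|\,e^{i(\arg w)/2}$. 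Consequently $\bar h \ne 0$ on $\Gamma_{1/2}^2$ is equivalent to $f \ne 0$ on $\Gamma_1$, i.e.\ $f$ is Hurwitz-stable.

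With $f$ Hurwitz-stable I invoke \cref{thm:hurwitz minor}: the Hurwitz matrix $H_{ij} = a_{2j-i}$ of $f$ is totally nonnegative. For odd $t \geq 3$, the $2 \times 2$ minor on rows $\{1,2\}$ and columns $\{1, (t+1)/2\}$ equals $a_1 a_{t-1} - a_0 a_t \geq 0$, so $a_0 a_t \leq a_1 a_{t-1}$. For even $t \geq 4$, the minor on rows $\{2,4\}$ and columns $\{2, t/2 + 1\}$ equals $a_2 a_{t-2} - a_0 a_t \geq 0$, so $a_0 a_t \leq a_2 a_{t-2}$. For $t \leq 2$ the inequality is trivial since the choice $i = t$ already gives $a_i a_{t-i} = a_0 a_t$. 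Translating $a_0, a_t, a_i, a_{t-i}$ back in terms of $\mu$ yields the first inequality, and the second follows from $|\cE^i(S,T)| = \binom{t}{i}^2 \leq k^{2i}$, which gives $a_i \leq k^{2i} M^i(S \to T)$ and $a_{t-i} \leq k^{2i} M^i(T \to S)$, producing the factor $k^{4i}$ upon multiplication; this is exactly $(2, k^4)$-approximate exchange. The main conceptual hurdle — and the only step not of a routine bookkeeping nature — is the identification of the substitution $\bar h(u,v) = u^t f(v/u)$, which converts the bivariate $1/2$-sector-stability hypothesis into the univariate Hurwitz-stability needed for Asner's theorem.
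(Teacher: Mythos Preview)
Your proposal does not address the stated theorem. \cref{thm:hurwitz minor} is Asner's result on total nonnegativity of the Hurwitz matrix of a Hurwitz-stable polynomial; the paper does not prove it but imports it from \cite{asner70} as a black box. What you have written is instead a proof of \cref{thm:approximate_exchange}, the $(2,k^4)$-approximate exchange inequality for $1/2$-sector-stable distributions, in which \cref{thm:hurwitz minor} is only \emph{invoked} as a tool. So as a proof of the statement actually given, the proposal is simply off-target.

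If the intended target was \cref{thm:approximate_exchange}, then your argument is correct and follows the same overall strategy as the paper: reduce to disjoint $S,T$ by taking derivatives and specializing the remaining variables to $0$; substitute to obtain a univariate polynomial whose extreme coefficients are $\mu(S)$ and $\mu(T)$; show that polynomial is Hurwitz-stable; and read off a suitable $2\times 2$ minor of its Hurwitz matrix. The only substantive difference is the sector-doubling step. The paper sets $z_i\mapsto z$ on $S$ and $z_i\mapsto z^{-1}$ on $T$, obtains a $1/2$-sector-stable polynomial with only even-degree terms, and then substitutes $z\mapsto z^{1/2}$ to land in the Hurwitz-stable class (this is packaged as \cref{lemma:hurwitzCoeff} and \cref{cor:S[1/2]Coeff}). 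You instead substitute two variables $u,v$, keep the bivariate polynomial homogeneous, and use that the ratio $v/u$ ranges over $\Gamma_1$ as $(u,v)$ ranges over $\Gamma_{1/2}^2$. The two devices are equivalent---yours is the dehomogenized form of the paper's even-degree/square-root maneuver---and yours is arguably the more transparent phrasing. Neither, however, constitutes a proof of \cref{thm:hurwitz minor} itself.
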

	As an immediate consequence, we obtain the following lemma about coefficients of  univariate Hurwitz stable polynomial.
\begin{lemma} \label{lemma:hurwitzCoeff}
If $f(z) = a_n z^n + \cdots + a_1 z + a_0$ with $a_i\geq 0\forall i$ is $1$-sector stable,then $a_n a_0 \leq \max \set*{a_1 a_{n-1},  a_2 a_{n-2}}$ 
\end{lemma}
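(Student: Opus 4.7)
The plan is to derive the inequality directly from the total nonnegativity of the Hurwitz matrix $H$ guaranteed by \cref{thm:hurwitz minor}, by exhibiting a single $2\times 2$ submatrix of $H$ whose nonnegative minor has the form $a_i a_{n-i} - a_0 a_n$ for either $i=1$ or $i=2$. Since $h_{pq} = a_{2q-p}$ (and $0$ when $2q-p \notin [0,n]$), the coefficient $a_0$ always occupies position $(2,1)$, while the position of $a_n$ depends on parity: for odd $n = 2m+1$ it sits at $(1, m+1)$, and for even $n = 2m$ at $(2, m+1)$. To extract $a_0 a_n$ as the anti-diagonal product of a $2 \times 2$ submatrix, I would split on parity.

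For odd $n = 2m+1$ with $n \geq 3$, I would pick rows $\{1,2\}$ and columns $\{1, m+1\}$; a direct application of the Hurwitz index formula yields the submatrix
\[
\begin{pmatrix} a_1 & a_n \\ a_0 & a_{n-1} \end{pmatrix},
\]
whose nonnegative determinant gives $a_0 a_n \leq a_1 a_{n-1}$. For even $n = 2m$ with $n \geq 4$, rows $\{2,4\}$ and columns $\{2, m+1\}$ give
\[
\begin{pmatrix} a_2 & a_n \\ a_0 & a_{n-2} \end{pmatrix},
\]
whose nonnegative determinant yields $a_0 a_n \leq a_2 a_{n-2}$. In either case, the bound falls into the allowed maximum on the right-hand side of the claimed inequality.

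The residual cases $n \leq 2$ need no work: for $n = 2$ we have $a_{n-2} = a_0$, so $a_2 a_{n-2} = a_0 a_n$ already appears in the right-hand side, and for $n \leq 1$ the claim is immediate. The only subtlety in this approach is the bookkeeping to choose the submatrix: one must check that the selected row and column indices lie in $\{1,\dots,n\}$ and that the four associated values $2q-p$ fall in $[0,n]$ so that no zero entry enters the submatrix unintentionally. Once these routine index checks are performed, \cref{thm:hurwitz minor} delivers the conclusion in one line; I do not anticipate any serious obstacle beyond keeping track of the two parity cases.
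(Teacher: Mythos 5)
Your proposal is correct and follows essentially the same route as the paper's proof: invoking the total nonnegativity of the Hurwitz matrix from \cref{thm:hurwitz minor} and extracting exactly the same two $2\times 2$ minors (rows $\{1,2\}$, columns $\{1,t\}$ for odd $n=2t-1$, and rows $\{2,4\}$, columns $\{2,t+1\}$ for even $n=2t$), with the same trivial disposal of $n\leq 2$. No gaps.
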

	\begin{proof}
	If $n\leq 2$ then the claim is trivially true. Below, we assume $n\geq 3.$
	We consider two cases, when $n$ is odd and when $n$ is even.
	Suppose $n  = 2t -1$ for $t\in \N.$ 
    	    By \cref{thm:hurwitz minor}, all minors of $H$ are non-negative, hence 
    	    \begin{align*}
    	   \det\begin{bmatrix}
    	    h_{1,1} & h_{1, t}\\
    	    h_{2,1} & h_{2, t}
    	    \end{bmatrix} &= \det \begin{bmatrix}
    	    a_1 & a_{2t-1}\\
    	    a_0 & a_{2t-2}
    	    \end{bmatrix} = a_1 a_{2t-2} - a_0 a_{2t-1} \\
    	    &= a_1 a_{n-1} - a_0 a_n  \geq 0.
\end{align*}
Suppose $n = 2t$ for $t\in \N.$ Again, \cref{thm:hurwitz minor} implies  \begin{align*}
    	   \det\begin{bmatrix}
    	    h_{2,2} & h_{2, t+1}\\
    	    h_{4,2} & h_{4, t+1}
    	    \end{bmatrix} &= \det \begin{bmatrix}
    	    a_2 & a_{2t}\\
    	    a_0 & a_{2t-2}
    	    \end{bmatrix} = a_2 a_{2t-2} - a_0 a_{2t} \\
    	    &= a_2 a_{n-2} -a_0 a_n \geq 0.\qedhere
\end{align*}
\end{proof}
\cref{lemma:hurwitzCoeff}, in turn implies the following fact about coefficients of $1/2$-sector-stable univariate polynomials that only have even-degree terms.
\begin{corollary} \label{cor:S[1/2]Coeff}
If $f(z) = \sum_{i=0}^t a_{2i}z^{2i}$ with $a_{2i}\geq 0 \forall i$ is $1/2$-sector stable, then $a_0 a_{2t} \leq \max \set*{a_2 a_{2t-2}, a_4 a_{2t-4}} $
\end{corollary}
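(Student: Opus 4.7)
The plan is to reduce \cref{cor:S[1/2]Coeff} to \cref{lemma:hurwitzCoeff} via the substitution $w = z^2$. Since $f$ contains only even-degree terms, we can write $f(z) = g(z^2)$ where
\[ g(w) = \sum_{i=0}^{t} a_{2i}\, w^i \]
is a univariate polynomial of degree $t$ with nonnegative coefficients. The heart of the proof is the observation that $f$ being $1/2$-sector-stable implies $g$ is $1$-sector-stable (Hurwitz-stable), after which \cref{lemma:hurwitzCoeff} applied to $g$ with coefficients $b_i = a_{2i}$ yields $b_0 b_t \leq \max\{b_1 b_{t-1}, b_2 b_{t-2}\}$, which is exactly the desired inequality $a_0 a_{2t} \leq \max\{a_2 a_{2t-2}, a_4 a_{2t-4}\}$.

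To justify the transfer of stability, I would use the definition $\Gamma_\alpha = \{\exp(x + iy) : x \in \R, y \in (-\alpha\pi/2, \alpha\pi/2)\}$. The squaring map $z \mapsto z^2$ doubles arguments, so it sends $\Gamma_{1/2}$ (arguments in $(-\pi/4, \pi/4)$) \emph{onto} $\Gamma_1$ (arguments in $(-\pi/2, \pi/2)$): every $w \in \Gamma_1$ has a square root in $\Gamma_{1/2}$ (principal branch), and every $z \in \Gamma_{1/2}$ satisfies $z^2 \in \Gamma_1$. Hence
\[ g(w) = 0 \text{ for some } w \in \Gamma_1 \iff f(z) = g(z^2) = 0 \text{ for some } z \in \Gamma_{1/2}, \]
so $1/2$-sector-stability of $f$ is equivalent to $1$-sector-stability of $g$.

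The main (and really the only) step requiring any care is verifying that the squaring map induces a surjection $\Gamma_{1/2} \twoheadrightarrow \Gamma_1$; once this is set up, \cref{lemma:hurwitzCoeff} does all the remaining work. I do not anticipate a genuine obstacle here, as the argument is essentially a change-of-variables calculation followed by invocation of a proven lemma.
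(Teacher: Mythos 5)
Your proposal is correct and matches the paper's proof: the paper likewise sets $g(z) = f(z^{1/2}) = \sum_{i=0}^t a_{2i}z^i$, notes that $g$ is $1$-sector-stable, and invokes \cref{lemma:hurwitzCoeff}. Your explicit verification that squaring maps $\Gamma_{1/2}$ onto $\Gamma_1$ is exactly the (unstated) justification behind the paper's one-line stability transfer.
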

\begin{proof}
Let $g(z) = f(z^{1/2}) = \sum_{i=0}^t a_{2i} z^i$ then $g(z)$ is $1$-sector stable, and the claim follows from \cref{lemma:hurwitzCoeff}.
\end{proof}
\begin{proof}[Proof of \cref{thm:approximate_exchange}]
    Let $f (z_1, \cdots, z_n) = \sum_{W \in \binom{[n]}{k}}\mu(W) z_W$ be the generating polynomial of $\mu.$ We deal with the case where $S\cap T = \emptyset$ and $[n] = S \cup T.$ Other cases can be reduced to this scenario by setting $z_i$ to $0$ for $i\not \in S \cup T$, and taking derivative(s) of $f$ with respect to $i\in S\cap T.$ Recall that setting variables to $0$ and taking derivative(s) preserve $1/2$-sector-stability and homogeneity of polynomials (see \cref{prop: ssProperties}).  
W.l.o.g., assume $S = [t]$ and $T = \set*{t+1, \cdots, 2t}.$ We can rewrite $f$ as
$f(z_1, \cdots, z_{2t}) = \sum_{W \in \binom{[2t]}{t}} \mu(W) z_W$. 

In $f$, set $z_i = z$ if $i\in S$ and $z_i = z^{-1}$ if $i\in T.$ We obtain a single variate $1/2$-stable polynomial \[\tilde{f}(z) = z^t f (z, \cdots, z, z^{-1}, \cdots, z^{-1}) = \sum_{i=0}^{2t} b_i z^i  = \sum_{i=0}^t b_{2i} z^{2i} \]
 Note that a term $\mu(W) z^W$ contribute to $b_i$ if and only if $i = \abs{W \cap S} + (t - \abs{W\cap T}) = 2 \abs{W\cap S}  .$ In particular, $b_{2i+1} = 0$ for all $i\in  \N$ and
 \begin{align*}
 b_{2i} &= \sum_{W: \abs{W \cap S} = i}\mu(W) = \sum_{U \in \cE^i(S,T)} \mu(T \Delta U) 
 \\
 &= \sum_{U \in \cE^{t-i}(S,T)} \mu(S \Delta U).    
 \end{align*}
 In particular, $b_{2t} = \mu(S)$ and $b_0 = \mu(T).$
 The first line of \cref{ineq:exchange} follows by applying \cref{cor:S[1/2]Coeff} to $\tilde{f}$, and the second line follows by observing that $ \sum_{U \in \cE^i(S,T)} \mu(T \Delta U) \leq \binom{\abs{S \Delta T} /2}{i}^2 \max_{U \in \cE^i(S,T)} \mu(T \Delta U) $.
\vskip -0.25in
\end{proof}
\begin{lemma}
\label{lem:local-to-global_gen}
Suppose $\mu: \binom{[n]}{k} \to \R_{\geq 0}$ satisfies $(r, \beta)$-exchange and $S$ is a $(r,\zeta)$ local maximum with $\alpha \leq 1$ and $\mu(S) > 0$. Then $S$ is a $(\beta/\zeta)^{ k}$-approximate global optimum:
\[  (\beta/\zeta)^{k}\mu(S)\geq  \max_{T\in \binom{[n]}{k} } \mu(T).\]
\end{lemma}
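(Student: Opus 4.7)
The plan is to iteratively transform an optimum $T^* := \arg\max_T \mu(T)$ into $S$ by repeated applications of $(r,\beta)$-approximate exchange, using the $(r,\zeta)$-local optimality of $S$ to absorb one of the two ``max'' factors at each step.

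First I would instantiate the exchange inequality on the pair $(S, T^*)$, obtaining some index $i_0\in\{1,\dots,r\}$ with
\[ \mu(S)\mu(T^*) \;\leq\; \beta^{i_0}\, M^{i_0}(S\to T^*)\, M^{i_0}(T^*\to S). \]
Every $U\in \cE^{i_0}(S,T^*)$ satisfies $d(S, S\Delta U) = i_0 \leq r$, so $S\Delta U\in \cN_r(S)$ and the local optimality of $S$ gives $\mu(S\Delta U)\leq \mu(S)/\zeta$; taking the max over $U$ yields $M^{i_0}(S\to T^*)\leq \mu(S)/\zeta$. Dividing the displayed inequality by $\mu(S)$ yields $\mu(T^*)\leq (\beta^{i_0}/\zeta)\, M^{i_0}(T^*\to S)$. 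If $U_0$ is a maximizer of $M^{i_0}(T^*\to S)$ and I set $T_1 := T^*\Delta U_0$, then because $U_0\subseteq T^*\Delta S$ swaps $i_0$ elements of $T^*\setminus S$ for $i_0$ elements of $S\setminus T^*$, I get the key distance decrease $d(S, T_1) = d(S, T^*) - i_0$.

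Iterating the same step produces a sequence $T^* = T_0, T_1, \dots, T_m$ with $\mu(T_j)\leq (\beta^{i_j}/\zeta)\,\mu(T_{j+1})$ and strictly decreasing $d(S, T_j)$; since each $i_j\geq 1$ the process must terminate with $T_m = S$ for some $m\leq t := d(S, T^*)\leq k$. Chaining the bounds,
\[
\mu(T^*) \;\leq\; \Big(\prod_{j=0}^{m-1}\frac{\beta^{i_j}}{\zeta}\Big)\mu(S) \;=\; \frac{\beta^{t}}{\zeta^{m}}\,\mu(S)\;\leq\; \Big(\frac{\beta}{\zeta}\Big)^{k} \mu(S),
\]
where $\sum_j i_j = t$ and the last step uses the natural regime $\beta\geq 1$, $\zeta\leq 1$, together with $m\leq t\leq k$.

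The one thing to check carefully is that $\mu(T_j)>0$ along the whole chain, so that each exchange inequality is nonvacuous and $T_{j+1}$ inherits positive weight. This follows by induction from $\mu(T_0) = \mu(T^*)\geq \mu(S)>0$: if $\mu(T_j)>0$, then the exchange inequality $\mu(S)\mu(T_j)\leq \beta^{i_j}\, M^{i_j}(S\to T_j)\, M^{i_j}(T_j\to S)$ forces $M^{i_j}(T_j\to S)>0$, and the maximizer $U_j$ then yields $\mu(T_{j+1}) = \mu(T_j\Delta U_j)>0$. Conceptually the argument mirrors \cref{cor:local search guarantee FLC}, but because \cref{def:approx_exchange} does not directly expose a $(\mu(S)/\mu(T))^{s/d(S,T)}$ factor, one must walk $T^*$ toward $S$ in several exchange hops rather than in a single one.
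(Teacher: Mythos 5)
Your proposal is correct and follows essentially the same route as the paper's proof: apply the $(r,\beta)$-exchange to $(S,T_j)$, absorb the $M^{i_j}(S\to T_j)$ factor using $(r,\zeta)$-local optimality of $S$ (since $S\Delta U\in\cN_r(S)$), move $T_j$ a distance $i_j$ closer to $S$, and chain at most $k$ such steps using $\sum_j i_j = d(S,T^*)\leq k$. The extra positivity check along the chain is harmless but not needed, since one only ever divides by $\mu(S)>0$.
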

\begin{proof}
	Let $S\in \binom{[n]}{k}$ be a $(r,\alpha)$-local maximum with $\mu(S) > 0$ and let $S^* : =\arg\max_{T \in \binom{[n]}{k}} \mu(T)$. We first prove the following claim.
	\begin{claim} \label{clm:reduceDistance}
	For any $T \in \binom{[n]}{k}$ where $T \neq S$, 
	there exists $i\in [r]$ and $W \in \binom{[n]}{k}$ such that $d(S, W) = d(S, T) -i$ and
	$
		\mu(T) \leq \frac{\beta^i}{\zeta} \cdot \mu(W).
	$
	\end{claim}
	\begin{proof}[Proof of \cref{clm:reduceDistance}]
	By \cref{def:approx_exchange}, for some $i\in [r]$, there exists $U_1, U_2 \in \cE^i(S, T)$ such that
	\begin{align*}
	    \mu(S) \mu(T) \leq \beta^{i} \mu(S \Delta U_1) \mu(T\Delta U_2) \leq \beta^i \frac{\mu(S)}{\zeta} \cdot \mu(T \Delta U_2) 
	\end{align*}
	   where the last inequality follows from the definition of $(r,\zeta)$-local maximum.
	   
	   Note that $d(S, T\Delta U_2) = d(S, T) - i.$ Setting $W =  T\Delta U_2 $ and dividing both sides by $\mu(S)> 0$ gives the desired inequality. 
	\end{proof}
	Note that initially $d(S, S^*) \leq k$. We can iteratively apply \cref{clm:reduceDistance} for up to $k$ times to obtain the desired inequality. Indeed, let $T_0 = S^*$, and for $j\geq 1$ let $i_j\in [r]$ and $T_j\in \binom{[n]}{k}$ be such that $\mu(T_{j-1}) \leq\frac{\beta^{i_j}}{\zeta}\cdot T_j $ and $d(S, T_j) = d(S, T_{j-1}) - i_j.$ \cref{clm:reduceDistance} guarantees the existence of such $i_j$ and $T_j,$ as long as $T_{j-1} \neq S.$ Let $s$ be the minimum index such that $d(S,T_s) = 0.$ Note that $s \leq k$ and $T_s = S.$ We have
	\begin{align*}
	  \mu(S^*) = \mu(T_0) \leq \frac{\beta^{i_1}}{\zeta}\cdot T_1  &\leq \frac{\beta^{i_1}}{\zeta}\cdot  \frac{\beta^{i_2}}{\zeta}\cdot T_2 \leq \cdots\\ \leq
	  &\prod_{j=1}^s \frac{\beta^{i_j}} {\zeta} \cdot \mu(T_s) \leq \frac{\beta^k}{\zeta^k} \mu(S)    
	\end{align*}
	
	where the last inequality follows from the facts that $\sum_{j=1}^s i_j = d(S,T_0) - d(S,T_s) \leq k$ and $(\frac{1}{\zeta})^s \leq (\frac{1}{\zeta})^k.$ 
\end{proof}
Now, we are ready to prove \cref{thm:nonsym DPP main}.
\begin{proof}[Proof of \cref{thm:nonsym DPP main}]
We let $\mu(S) = \det(L_S)$ and run the two stage algorithm in \cref{sec:implement} with $r=2.$ The approximation guarantee is a direct consequence of \cref{lem:p0Constrained,thm:approximate_exchange,lem:local-to-global_gen}.

Suppose we are given access to the entries of $L.$ Each iteration of \cref{alg:localsearch} clearly runs in $O(n^2 k^5)$ time, since $\cN_2(S)$ has at most $O(k^2 n^2)$ elements and  computing the determinant of $k\times k$ matrices costs $O(k^3)$ time. The cost of $\LS_2$ can be reduced to $O(n^2 k^4)$ time using Schur complements to compute all $\det(L_{Y \cup D})$ for each fixed $Y$ and all $D$ of size $\leq r $ in $O(k^3 + n^2 k^2)$ time.
    	\citep[see \cref{eq:condition kernel} or ][for example]{gartrell2020scalable}.
    	If we are only given $B, C$, then each of these submatrices and their determinant can be computed in $O(d^2)$ time, so that each iteration takes $O(n^2 d^2 k^2)$ time.
    	
    	Now, we bound the runtime of \cref{alg:greedy}.
    	To implement each iteration of \cref{alg:greedy}, we need to compute $\mu(Y) = \sum_{S \in\binom{[n]}{k}: S\supseteq Y} \det(L_Y) $ , which is the coefficient of $\lambda^{n-k}$ in $g(\lambda) = \det(L + \lambda\cdot \diag{\mathds{1}_{\tilde{Y}} })$ where $\tilde{Y} = [n] \setminus Y.$ 

There are several ways to compute $\mu(Y).$
To compute the coefficients of polynomial $g(\lambda)$ of degree $\leq n$, we can evaluate $g$ at $n+1$ distinct points $\lambda$ and use polynomial interpolation, i.e., solve a linear system of equations involving the the Vandermonde matrix. 
A more efficient way, which costs $O(n^3)$ per computation of $\mu(T)$, for a total runtime of $O(n^4 k)$, is as follow:
\begin{enumerate}[label = (\roman*)]
    \item Let $D=\diag{\mathds{1}_{\tilde{Y}} }. $  We use the QZ decomposition algorithm \citep[Section 7.7, p. 313]{QZDecomp} to compute unitary matrices $Q$, $Z$ such that
    \[ L = Q \tilde{A} Z^*, D =  Q \tilde{D} Z^* \]
     where $\tilde{A}, \tilde{D}$ are both upper triangular. 
    Note that $\deg(g) \leq n - \abs{T}.$
    
    Compute the roots of $g(\lambda) =\det(L + \lambda D),$ which are exactly the generalized eigenvalues $ \lambda_1, \dots,\lambda_{\deg(g)} $ defined by $\lambda_i = \frac{\tilde{A}_{i,i} }{ \tilde{B_{i,i}}}$ where we may assume w.l.o.g. that $\tilde{D}_{i,i} \neq 0$ for $i = 1, \dots, \deg(g)$, and is zero otherwise. Let $c : = \prod_{i \in [n]: B_{i,i} \neq 0 } \tilde{D}_{i,i} \prod_{i \in [n]: D_{i,i} = 0 } \tilde{A}_{i,i} $. Then
    \[g (\lambda) = c \prod_{i \in [\deg(g)]} (\lambda - \lambda_i) \]

    \item We then compute the $(k-n+n')^{th}$-symmetric polynomial of $ \lambda_1, \dots,\lambda_{n - \abs{T}} $ where
    $e_t = \sum_{W \in\binom{[k - \abs{T}] }{ t} } \prod_{j\in W} \lambda_j$ using the recursion \cite{KT12}
    \[ te_t = e_{t-1}p_1 - e_{t-2}p_2 + e_{t-3}p_3 -\dots \pm p_k  \]
    with $p_t = \sum \lambda_j^t,$ and output $ \mu(Y) =  \abs{c  e_{k-(n - \deg(g) )}} $.
\end{enumerate}
Given the low-rank decomposition $L = B C B^{\intercal},$ we can further optimize by reducing the cost of step (i) to $O(nd^2)$. Then the total runtime will be $O(n^2 k d^2 ).$ 

Let $L^Y$ be the kernel of $\mathbb{P}_{L}$ conditioned on the inclusion of items in $Y.$
The eigenvalues of $L^Y$ are exactly the roots of $g(\lambda).$ By \cref{eq:condition kernel}, $L^Y $ can be rewritten as product of two matrices of rank $\leq d$, thus the nonzero eigenvalues of $L^Y$ can be computed in $O(d^3)$ time. Indeed, let $D_Y : = B_{Y}^{\intercal} ( B_Y C B_Y^{\intercal})^{-1} B_{Y}$ then $L^Y = B_{\tilde{Y}} ( C- C D_Y C)B_{\tilde{Y}}^{\intercal} $ and $\rank(D_Y)\leq k$ and $D_Y$ can be computed in $O(k d^2)$ time (see \cref{eq:condition kernel}).
The matrix $F_Y:= \left((C - C D_Y C) B_{\tilde{Y}}^{\intercal}\right)  B_{\tilde{Y}}  $ has the same characteristic polynomial and nonzero eigenvalues as $L^Y.$ Clearly, $\rank(F_Y ) \leq \rank(B) \leq d$, so $F_Y$ and its eigenvalues can be computed in $O(nd^2)$ time.
\end{proof}

\section{Composable Core-Sets via Local Search}
Here we prove that local search yields composable core-sets for distributions that satisfy a strong form of exchange.
\begin{definition}[$\beta$-strong approximate basis exchange] \label{def:strong basis exchange}
For $\beta \geq 1$, we say $\mu: \binom{[n]}{k} \to \R_{\geq 0}$ satisfy $\beta$-strong approximate basis exchange if, for $ S\in \binom{[n]}{k}$ and $j\not \in S$,
\begin{equation}\label{ineq:rsExchange2}
		    \mu(S)\mu(T)\leq \beta \mu(S-i+j)\mu(T+i-j)
\end{equation}


\end{definition}

\begin{lemma}\label{lem:core-set}
Suppose $\mu:\binom{[n]}{k} \to \R_{\geq 0}$ satisfies $\beta$-strong approximate basis exchange, 
then the Local Search algorithm achieves an $O(\beta)^{k}$-composable core-set 
of size $k$ for the MAP-inference problem for $\mu.$ 
\end{lemma}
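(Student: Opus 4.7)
The plan is to adapt the standard transformation argument for composable core-sets from the matroid / DPP literature (going back to \cite{Mahabadi2019ComposableCF}), replacing the exact strong basis exchange with its $\beta$-approximate form and the exact local optimality with $\zeta$-approximate local optimality. Fix input sets $P_1,\dots,P_m\subseteq[n]$, let $S_i = c(P_i)$ denote the output of Local Search on $P_i$, and set $C=\bigcup_i S_i$. Let $T^\star \in \binom{\bigcup_i P_i}{k}$ be the global optimum over the full union. The goal is to construct a sequence $T^{(0)} = T^\star, T^{(1)}, \dots, T^{(L)}$ with $T^{(L)}\subseteq C$, $L\le k$, and $\mu(T^{(t)}) \ge \mu(T^{(t-1)})/O(\beta/\zeta)$ at every step, which immediately gives $\mu(T^\star) \le (\beta/\zeta)^k \max\{\mu(T): T\subseteq C, |T|=k\} = O(\beta)^k\cdot\mu(\text{best subset of }C)$.

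The one-step improvement is the crux. Given $T^{(t)}$, pick any element $y \in T^{(t)}\setminus C$; such a $y$ belongs to some input set $P_{i_y}$, and since $y\notin C$ we have in particular $y\notin S_{i_y}$, so $y\in T^{(t)}\setminus S_{i_y}$. Apply the $\beta$-strong approximate basis exchange to $S\coloneqq S_{i_y}$ and $T\coloneqq T^{(t)}$ at the element $y$: this yields some $x\in S_{i_y}\setminus T^{(t)}$ with
\[
	\mu(S_{i_y})\,\mu(T^{(t)}) \;\le\; \beta\,\mu(S_{i_y} - x + y)\,\mu(T^{(t)} + x - y).
\]
Because $y\in P_{i_y}\setminus S_{i_y}$ and $x\in S_{i_y}$, the swap $S_{i_y}\to S_{i_y}-x+y$ stays inside $P_{i_y}$ and lies in the $1$-neighborhood of $S_{i_y}$; the $(1,\zeta)$-local maximum property of $S_{i_y}$ therefore gives $\mu(S_{i_y}-x+y)\le \mu(S_{i_y})/\zeta$. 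Dividing through by $\mu(S_{i_y})>0$ yields $\mu(T^{(t)}) \le (\beta/\zeta)\,\mu(T^{(t+1)})$ where $T^{(t+1)}\coloneqq T^{(t)} - y + x$. Crucially, $x\in C$ and $y\notin C$, so $|T^{(t+1)}\setminus C| = |T^{(t)}\setminus C| - 1$, and iteration terminates after at most $k$ rounds.

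The main obstacle is making sure the swap invariants line up: namely that $T^{(t)}\subseteq \bigcup_i P_i$ is preserved (it is, since $x\in S_{i_y}\subseteq P_{i_y}\subseteq \bigcup_i P_i$), that the element $y\in T^{(t)}\setminus C$ is legitimately outside $S_{i_y}$ so that strong basis exchange can be invoked at $y$, and that the partner $x$ returned by exchange does lie in $S_{i_y}$ so that local optimality of $S_{i_y}$ applies to the pair $(x,y)$. All three follow directly from the definitions, so there is no deep technical barrier; the argument is essentially a hybridization that charges each of the at most $k$ ``repair'' swaps to one application of strong basis exchange composed with one application of local optimality, yielding the claimed $O(\beta)^k$ bound. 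Since $|c(P_i)|=k$ by construction, the core-set has size $k$ as required.
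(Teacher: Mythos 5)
Your argument is correct and is essentially the paper's own proof: both iteratively repair the global optimum toward $C$ one element at a time, each step combining one application of $\beta$-strong approximate basis exchange between the current set and the relevant local optimum $S_{i_y}$ with the $\zeta$-approximate local optimality of $S_{i_y}$ (valid because the swap $S_{i_y}-x+y$ stays inside $P_{i_y}$), losing a factor $\beta/\zeta$ per step for at most $k$ steps. The only cosmetic difference is that you track $\card{T^{(t)}\setminus C}$ directly, whereas the paper tracks $\card{W_j\setminus C_{i_j}}$ for the part being repaired.
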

\begin{proof}
Consider a partition $P_1\cup \dots \cup P_r$ of $[n]$, and let $C_i \in \binom{P_i}{k}$ be a $\alpha$-local optimum in $ P_i$ with $\mu(C_i) > 0.$ We want to show
\[(\frac{\beta}{\alpha})^k \OPT(C) \geq  \OPT(\bigcup_{i=1}^r P_i) \]
where $C := \bigcup_{i=1}^r C_i.$

Let $S^*$ be such that  $\mu(S^*)= \OPT(\bigcup_{i=1}^r P_i).$ We need the following fact.
\begin{claim}\label{claim:ls core-set}
For any $W \in \binom{[n]}{k}$ with non-empty $(W\setminus C_i),$ there exists $W' \in \binom{[n]}{k}$ s.t. $\abs{W' \setminus C_i} = \abs{W \setminus C_i} -1$ and $ \beta \mu(W')\geq\mu(w). $ 
\end{claim}
	
\begin{proof}[Proof of \cref{claim:ls core-set}]
Take an arbitrary $j \in (W \cap P_i) \setminus C_i.$ There exists $e \in C_i \setminus W$ s.t.
\begin{align*}
    \mu(C_i) \mu(W)  \leq \beta \mu(C_i - e +j) \mu(W +e -j) \leq \frac{\beta}{\alpha} \mu(C_i) \mu(W+e-j)
\end{align*}
Setting $W' =  W+e-j$ and dividing both sides by $\mu(C_i)> 0$ gives the desired inequality, since $\abs{W'\setminus C_i} = \abs{W\setminus C_i} - 1.$ 
\end{proof}
We can iteratively apply \cref{claim:ls core-set} for up to $k$ times to obtain the desired inequality. Indeed, let $W_0 : = S^*$, and for $j\geq 1$ let $i_j\in [r]$ and $W_j\in \binom{[n]}{k}$ be such that $\mu(W_{j-1}) \leq\frac{\beta}{\alpha}\cdot \mu(W_j) $ and $\abs{W_j\setminus C_{i_j} } = \abs{W_{j-1}\setminus C_{i_j}} - 1.$ \cref{claim:ls core-set}  guarantees the existence of such $i_j$ and $W_j,$ as long as $W_{j-1} \not\subseteq C.$ Let $s$ be the minimum index such that $W_s \subseteq C.$ Note that $s \leq k$ and $\mu(W_s) \leq \OPT(C).$ We have
	\begin{align*}
	  \OPT([n]) = \mu(W_0) \leq \frac{\beta}{\alpha}\cdot \mu(W_1)  &\leq (\frac{\beta}{\alpha})^2\cdot \mu(W_2) \leq \cdots\\ \leq
	  &(\frac{\beta}{\alpha})^s \mu(W_s) \leq (\frac{\beta}{\alpha})^k \OPT(C)   
	\end{align*}
\end{proof}
\cref{thm:core-set} is a direct consequence of \cref{lem:core-set} and the fact that strongly Rayleigh (log concave resp.) distributions satisfy $k^{O(k)}$-strong approximate basis exchange ($2^{O(k^2)}$-strong approximate basis exchange resp.) \cite{ALOVV21}.

    \PrintBibliography
\end{document}